\newcommand\footnoteref[1]{\protected@xdef\@thefnmark{\ref{#1}}\@footnotemark}
    \crefname{figure}{Figure}{Figures}%
\newcommand{\eqlabelleft}{(}
\newcommand{\eqlabelright}{)}
\theoremstyle{plain}
\theoremstyle{definition}
\theoremstyle{remark}
\newcommand{\appropto}{\mathrel{\vcenter{
  \offinterlineskip\halign{\hfil$##$\cr
    \propto\cr\noalign{\kern2pt}\sim\cr\noalign{\kern-2pt}}}}}
\newcommand{\cmark}{\ding{51}}  
\newcommand{\xmark}{\ding{55}}  
\def\eqref#1{equation~\ref{#1}}
\def\1{\bm{1}}
\def\rvc{{\mathbf{c}}}
\def\rvs{{\mathbf{s}}}
\def\rvx{{\mathbf{x}}}
\def\rmI{{\mathbf{I}}}
\DeclareMathAlphabet{\mathsfit}{\encodingdefault}{\sfdefault}{m}{sl}
\SetMathAlphabet{\mathsfit}{bold}{\encodingdefault}{\sfdefault}{bx}{n}
\title{Training-Free Safe Text Embedding Guidance for \\ Text-to-Image Diffusion Models}
\author{
Byeonghu Na\textsuperscript{\textmd{1}} \quad Mina Kang\textsuperscript{\textmd{1}} \quad Jiseok Kwak\textsuperscript{\textmd{1}} \quad Minsang Park\textsuperscript{\textmd{1}} \\
\textbf{Jiwoo Shin\textsuperscript{\textmd{1}} \quad SeJoon Jun\textsuperscript{\textmd{1}} \quad Gayoung Lee\textsuperscript{\textmd{2}} \quad Jin-Hwa Kim\textsuperscript{\textmd{2,3}} \quad Il-Chul Moon\textsuperscript{\textmd{1,4}}} \\
\textsuperscript{\textmd{1}}KAIST,~ \textsuperscript{\textmd{2}}NAVER AI Lab,~ \textsuperscript{\textmd{3}}SNU AIIS,~ \textsuperscript{\textmd{4}}summary.ai\\
\texttt{\{byeonghu.na,kasong13,jskwak,pagemu,natu33,sjmathy,icmoon\}@kaist.ac.kr},\\
\texttt{\{gayoung.lee,j1nhwa.kim\}@navercorp.com}  \\
}
\begin{document}

\maketitle

\begin{abstract}
    Text-to-image models have recently made significant advances in generating realistic and semantically coherent images, driven by advanced diffusion models and large-scale web-crawled datasets. However, these datasets often contain inappropriate or biased content, raising concerns about the generation of harmful outputs when provided with malicious text prompts. We propose Safe Text embedding Guidance (STG), a training-free approach to improve the safety of diffusion models by guiding the text embeddings during sampling. STG adjusts the text embeddings based on a safety function evaluated on the expected final denoised image, allowing the model to generate safer outputs without additional training. Theoretically, we show that STG aligns the underlying model distribution with safety constraints, thereby achieving safer outputs while minimally affecting generation quality. Experiments on various safety scenarios, including nudity, violence, and artist-style removal, show that STG consistently outperforms both training-based and training-free baselines in removing unsafe content while preserving the core semantic intent of input prompts. Our code is available at \url{https://github.com/aailab-kaist/STG}.\\[0.5em]
    \color{red} Warning: This paper contains model-generated content that may be disturbing. \color{black}
\end{abstract}
 
\section{Introduction}
\label{sec:intro}

Recent advances in text-to-image models have received considerable attention for their ability to generate realistic images that semantically align with given text prompts~\cite{podell2024sdxl,ramesh2021zero,saharia2022photorealistic,xie2025sana}. These advances have largely been driven by the development of diffusion models~\cite{dhariwal2021diffusion,rombach2022high} and the availability of large-scale datasets collected through web crawling~\cite{changpinyo2021conceptual,schuhmann2022laion}. However, this approach to data collection often includes inappropriate or biased content, raising the risk that text-to-image models may generate images containing unsafe concepts, such as sexual content, violence, bias, or copyright infringement~\cite{birhane2023into,birhane2021multimodal}. Additionally, the concept of \textit{safe} can be defined in commercial settings, e.g., avoiding any intellectual property violations embodied by a certain style. Moreover, what is considered \textit{safe} can vary widely depending on individual sensitivities, cultural contexts, and social norms, making it challenging to define a universally safe model~\cite{naous2024beer,tao2024cultural}. This highlights the need for safe generation methods that can adapt to diverse perspectives and account for individual perceptions.

To address these challenges, several safe generation methods for diffusion models have been proposed in recent years. Concept unlearning approaches fine-tune the weights of the diffusion model to forget the unsafe content~\cite{gandikota2023erasing,liu2024latent,lyu2024one,parkdirect,zhang2024forget}. While this can be an effective way to remove unsafe concepts, it also presents a challenge in maintaining the original generative capabilities of the model. In addition, these methods require carefully curated safety-annotated text-image datasets for training, along with significant computational resources, which can limit their adaptability to diverse perspectives.  

Instead, training-free approaches for safe generation have also been proposed. Among them, filtering-based methods exclude unsafe concepts directly from the input prompt, preserving the original generative capacity~\cite{yang2024guardti}. However, these methods can be vulnerable to adversarial attacks, where carefully crafted prompts can bypass the filters, reducing their effectiveness. Alternatively, recent methods attempt to suppress unsafe content during the diffusion sampling process. These approaches include directly manipulating latent representations~\cite{schramowski2023safe}, and adjusting text embeddings or attention weights of the diffusion model~\cite{gandikota2024unified,gong2024reliable,yoon2024safree}. However, these approaches typically do not directly incorporate the intermediate or final samples produced by the diffusion model into their safety mechanisms (as illustrated in the third panel of \cref{fig:overview}), making it unclear how their safety methods influence the samples from the diffusion model. Furthermore, they often lack a clear theoretical foundation for understanding how their modifications affect the original model distribution.

In this paper, we propose Safe Text embedding Guidance (STG), a training-free approach for safe text-to-image diffusion models by guiding text embeddings in safer directions during the diffusion sampling process, as illustrated in the final panel of \cref{fig:overview}. STG is motivated by the observation that unsafe images often arise from text prompts that contain explicit or implicit unsafe concepts. Therefore, STG adjusts the text embeddings during the sampling process by directly incorporating intermediate latent samples for guidance, enabling the model to generate safer outputs without any additional training.
Specifically, we apply a safety function, originally evaluated on clean images, to the expected final denoised image from the current noisy image to obtain a safe guidance signal. Theoretically, we show that STG adjusts the perturbed data to align with the underlying model distribution and the desired safety constraints, generating safer outputs while minimizing degradation of the original generative quality. We experimentally validate STG for various safety scenarios, including nudity, violence, and artist-style removal. Our results show that STG consistently outperforms both training-based and training-free baselines, effectively removing unsafe content while preserving the original semantic intent of the input prompts.

\begin{figure}[tp]
    \centering
    \includegraphics[width=0.96\linewidth]{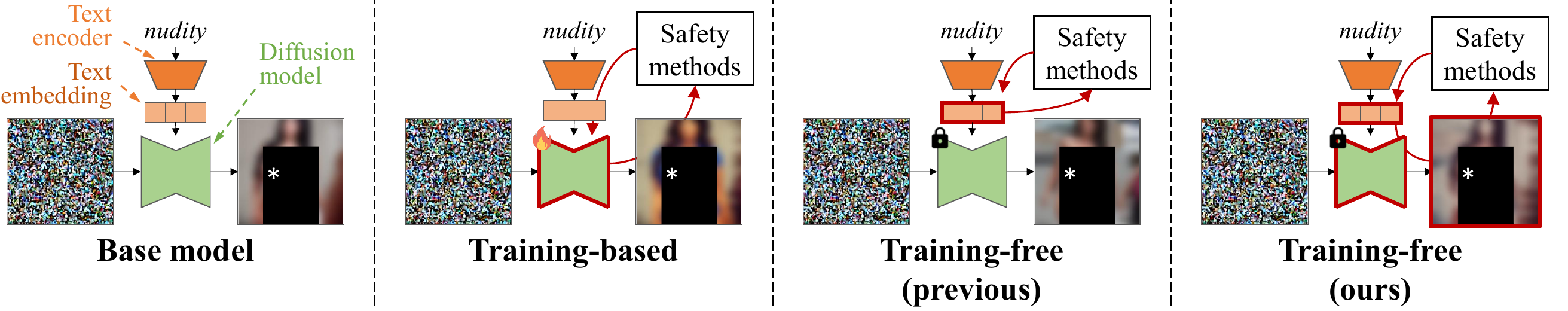}
    \caption{Overview of safe generation methods for diffusion models. Red borders indicate the components used in the safety methods. Training-based approaches fine-tune the diffusion model using additional resources and do not use samples at test time. Previous training-free methods~\cite{yoon2024safree} adjust text embeddings independently of the diffusion model. Our training-free method directly guides text embeddings using the diffusion model and its intermediate images, ensuring safer outputs without additional training. For publication purposes, the generated images are masked and blurred.}
    \vspace{-.5em}
    \label{fig:overview}
\end{figure}

\section{Related work}
\label{sec:rel}

\textbf{Training-based approaches}~~
Training-based methods for safe diffusion models involve fine-tuning the model's parameters to remove unsafe concepts~\cite{gandikota2023erasing,liu2024latent,lyu2024one,parkdirect}. For example, ESD~\cite{gandikota2023erasing} fine-tunes diffusion models by minimizing the difference between concept-conditional and unconditional outputs to erase unsafe concepts. DUO~\cite{parkdirect} performs preference optimization using the systematically generated unsafe and safe image pairs. While these training-based methods can effectively remove unwanted content, they often require additional curated training data and computational resources, and may risk degrading the model's ability to generate diverse and high-quality images.

\textbf{Training-free approaches}~~
Training-free methods aim to enable safe generation without additional training data, often by manipulating inputs or intermediate representations during inference~\cite{gandikota2024unified,gong2024reliable,schramowski2023safe,yoon2024safree}. For example, SLD~\cite{schramowski2023safe} adds the guidance using a conditional score function with unsafe text. UCE~\cite{gandikota2024unified} and RECE~\cite{gong2024reliable} adjust attention weights or text embeddings to suppress unsafe content. SAFREE~\cite{yoon2024safree} builds a subspace for unsafe token embeddings and filters embeddings that approach this subspace. However, these methods generally do not directly utilize intermediate or final diffusion states in their safety mechanisms, making their precise impact on the generated images ambiguous. They also remain vulnerable to adversarial prompts, as shown in \cref{subsec:exp_result}.

\begin{table}[tp]
    \centering
    \caption{Comparison of the safe guidance methods.}
    \adjustbox{max width=\linewidth}{%
    \begin{tabular}{llll}
        \toprule
        Method  & Guidance framework & Guidance target & Guidance module   \\
        \midrule
        SLD~\cite{schramowski2023safe}  & Classifier-Free Guidance~\cite{ho2021classifierfree} & Perturbed data $\rvx_t$ & Unsafe-cond. score network $\rvs_{\bm{\theta}}(\rvx_t,\rvc_{\text{unsafe}},t)$\\
        SG (\cref{subsec:safe_guidance})                              & Classifier Guidance~\cite{dhariwal2021diffusion} & Perturbed data $\rvx_t$ & Time-dependent classifier $g_t(\rvx_t, \rvc)$ \\
        SDG (\cref{subsec:safe_data_guidance}) & Universal Guidance~\cite{bansal2023universal} & Perturbed data $\rvx_t$ & Time-independent classifier $g(\bar{\rvx}_0(\rvx_t, \rvc))$ \\
        STG (\cref{subsec:safe_text_embedding}, ours)                      & Diffusion Adaptive Text Embedding~\cite{na2025diffusion} & Text embedding $\rvc$   & Time-independent classifier $g(\bar{\rvx}_0(\rvx_t, \rvc))$ \\
        \bottomrule
    \end{tabular}
    }
    \label{tab:compare_sg}
\end{table}

\textbf{Guidance methods in diffusion models}~~
In diffusion models, conditional generation is often implemented by adding a \textit{guidance} term to the base score function. Classifier Guidance (CG)~\cite{dhariwal2021diffusion} adds the gradient of a time-dependent classifier to inject conditional information, while Classifier-Free Guidance (CFG)~\cite{ho2021classifierfree} removes the need for an explicit classifier by using the difference between the conditional and unconditional scores. Universal Guidance (UG)~\cite{bansal2023universal} instead uses a time-independent classifier to address the need for time-dependent training. Diffusion Adaptive Text Embedding (DATE)~\cite{na2025diffusion} applies a time-independent classifier to text embeddings for better semantic alignment.

These methods can be adapted for safe generation, as summarized in \cref{tab:compare_sg}. For example, SLD~\cite{schramowski2023safe} uses a score function conditioned on unsafe text, similar to CFG. We formulate Safe Guidance (SG) analogous to CG in \cref{subsec:safe_guidance}, and Safe Data Guidance (SDG) analogous to UG in \cref{subsec:safe_data_guidance}. However, these methods rely on classifiers to estimate the safety probabilities, which are often approximated by proxy functions that can distort guidance directions and degrade generation quality, as discussed in \cref{subsec:toy}. We found that this issue is less pronounced when guidance is applied in the text embedding space, so we adopt a classifier to guide text embeddings instead of perturbed data.

\section{Preliminaries}
\label{sec:prelim}

\subsection{Diffusion models}
\label{subec:diffusion}

Diffusion models generate data by reversing a structured noise process, gradually transforming a noisy latent $\rvx_T$ into a clean data sample $\rvx_0$~\citep{ho2020denoising,song2021scorebased}. 
The forward process incrementally adds Gaussian noise to a clean sample $\rvx_0 \sim q(\rvx_0)$, forming a noisy latent $\rvx_T$ through a fixed Markov chain:
\begin{align}
    q(\rvx_{1:T}|\rvx_0) \coloneqq \textstyle \prod_{t=1}^{T} q(\rvx_{t} | \rvx_{t-1}), \text{ where } q(\rvx_{t}|\rvx_{t-1}) \coloneqq \mathcal{N}(\rvx_{t}; \sqrt{1-\beta_{t}} \rvx_{t-1}, \beta_{t} \mathbf{I}).
\end{align}
Here, $\rvx_{1:T}$ is the sequence of perturbed data, and $\beta_t$ is a pre-defined variance schedule parameter.
The reverse process progressively denoises $\rvx_T$ back to $\rvx_0$ using a parameterized Markov chain:
\begin{align}
    p_{\boldsymbol{\theta}}(\rvx_{0:T}) \coloneqq p_T(\rvx_{T}) \textstyle \prod_{t=1}^{T} p_{\boldsymbol{\theta}}(\rvx_{t-1} | \rvx_{t}), \text{ where } p_{\boldsymbol{\theta}}(\rvx_{t-1}|\rvx_{t}) \coloneqq \mathcal{N}(\rvx_{t-1}; \boldsymbol{\mu}_{\boldsymbol{\theta}}(\rvx_{t}, t), \sigma^2_{t}\rmI).
    \label{eq:backward_ddpm}
\end{align}
In this formulation, $p_T$ is typically chosen as a simple Gaussian prior, $\boldsymbol{\mu}_{\boldsymbol{\theta}}(\rvx_{t}, t)$ is the parameterized mean function, and $\sigma^2_{t}$ is a time-dependent variance parameter.

Diffusion models are trained to minimize the variational bound on the negative log-likelihood of the data~\cite{ho2020denoising}. Equivalently, the training objective can be formulated as a score matching problem~\cite{song2019generative,song2021scorebased}, where a score network $\rvs_{\boldsymbol{\theta}}(\rvx_t,t)$ is optimized to approximate the gradient of the log-density $\nabla_{\rvx_{t}} \log q_t(\rvx_t)$. 
During inference, the generation process starts by sampling a noisy latent $\rvx_T$ from the prior distribution $p_T(\rvx_T)$. Then, the noisy latent is progressively denoised by the learned reverse transitions $p_\theta(\rvx_{t-1}|\rvx_t)$, eventually producing a realistic output.

\subsection{Problem formulation: safe text-to-image diffusion models}
\label{subsec:formulation}

Text-to-image diffusion models extend the unconditional formulation by conditioning the score network on a text embedding $\rvc$ that encodes the textual prompt $y$. This enables the model to approximate the conditional score function $\nabla_{\rvx_{t}} \log q_t(\rvx_t|\rvc) $ for generating text-aligned images:
\begin{align}
    \min_{\boldsymbol{\theta}} \mathbb{E}_t \mathbb{E}_{\rvx_t \sim q_t(\rvx_t|\rvc)} \big [ || \rvs_{\boldsymbol{\theta}}(\rvx_t,\rvc,t) - \nabla_{\rvx_{t}} \log q_t(\rvx_t|\rvc) ||^2_2 \big ].
    \label{eq:text_conditional_score_matching}
\end{align}
Here, the text embedding $\rvc$ is typically obtained through a fixed, pre-trained text encoder~\cite{esser2024scaling,rombach2022high,saharia2022photorealistic}.
As we discussed earlier, the malicious usage of generative models usually comes from various prompts, requesting unsafe utilization, i.e., ranging from direct requests of inhibited generations to subtly nuanced prompts to result in such unsafe generations. 

To incorporate safety, we define a binary random variable $o \in \{ 0, 1 \}$ as a \textit{safety indicator}, where $o=1$ indicates a safe sample and $o=0$ indicates an unsafe one. Under this formulation, a safe text-to-image diffusion model aims to generate samples from the safe text-conditional distribution,
\begin{align}
    q_{\text{safe}} (\rvx_0 | \rvc) := q_0 (\rvx_0 | \rvc,o=1),
\end{align}
which defines a conditional distribution over samples that both align with the 
text condition $\rvc$ and meet the safety criterion, i.e., avoiding any improper data generation or intellectual property violations.

To sample from this distribution using a diffusion model, we need the safe text-conditional score function $\nabla_{\rvx_t} \log q_t(\rvx_t | \rvc, o=1)$. However, directly learning this score function would require a substantial number of safe text-image pairs, making this training an inefficient approach. Therefore, we propose to approximate this safe text-conditional score function without retraining the underlying diffusion model, by the text embedding $\rvc$ at inference time to guide sampling toward safer outputs.

\section{Method}
\label{sec:method}

\subsection{Safe guidance}
\label{subsec:safe_guidance}

Our goal is to achieve safe generation using a fixed pre-trained text-to-image diffusion model, as formulated in \cref{subsec:formulation}. Specifically, we aim to sample from the safe text-conditional distribution $q_t(\rvx_t|\rvc,o=1)$ without modifying the model parameters.
To enable this, we use a \textit{safety function} $g:\mathbb{R}^d \rightarrow \mathbb{R}$ that evaluates whether a clean image $\rvx_0$ is safe or not, where $d$ is the data dimension. In practice, this function can be instantiated using open-source classifiers like NudeNet~\cite{bedapudi2019nudenet} for nudity detection, or using pre-trained vision-language models like CLIP~\cite{radford2021learning}, as explained in \cref{subsec:exp_setting}.

We formulate Safe Guidance (SG) based on the CG framework~\cite{dhariwal2021diffusion}, which uses an external classifier to guide the generation process. The safe guidance for the safety indicator can be expressed as:
\begin{align}
    \nabla_{\rvx_t} \log q_t(\rvx_t|\rvc,o=1) =  \underbrace{\nabla_{\rvx_t} \log q_t(\rvx_t|\rvc)}_{\text{original text-conditional score}} + \quad \underbrace{\nabla_{\rvx_t} \log q_t(o=1|\rvx_t,\rvc)}_{\text{safe guidance}}. \label{eq:safe_guidance}
\end{align}
The first term, the original text-conditional score, can be estimated using the pre-trained score network $\rvs_{\bm{\theta}} (\rvx_t, \rvc, t)$. However, the second term, the safe guidance, is more challenging, as it requires estimating the classification probability that a given intermediate sample $\rvx_t$ is safe, conditioned on the text embedding $\rvc$. This estimation would require training on perturbed data $\rvx_t$ for all diffusion timesteps with the corresponding text conditions, resulting in significant computational overhead.

A key challenge is that the given safety function $g$ operates only on fully denoised images $\rvx_0$ and cannot be directly applied to the noisy intermediate data $\rvx_t$. In the following subsections, we first describe how the safety function $g$ can be used in the data space for safe guidance, following the Universal Guidance framework proposed in \cite{bansal2023universal}. Then, we present our proposed method, which applies the guidance in the text embedding space, effectively guiding the model toward safer outputs while preserving the core semantics of the original text condition.

\subsection{Safe data guidance (SDG)}
\label{subsec:safe_data_guidance}

We first introduce a method for applying safe guidance in the perturbed data space using the safety function $g$. This approach builds on the Universal Guidance framework~\cite{bansal2023universal}, originally developed for general diffusion-based conditional generation, and adapts it for safe generation.

To apply this method, we assume that the safety function $g(\rvx_0)$ is proportional to the safe probability distribution $q(o=1|\rvx_0)$. Under this assumption, the safe guidance term can be derived as follows:
\begin{align}
    \nabla_{\rvx_t} \log q_t(o=1|\rvx_t,\rvc) &  = \nabla_{\rvx_t} \log \mathbb{E}_{\rvx_0 \sim q(\rvx_0 | \rvx_t,\rvc)} [ q(o=1| \rvx_0) ] \label{eq:sg_1} \\
    & = \nabla_{\rvx_t}\log \mathbb{E}_{\rvx_0 \sim q(\rvx_0 | \rvx_t,\rvc)} [ g(\rvx_0) ]  \label{eq:sg_2} \\
    & \approx \nabla_{\rvx_t} \log g ( \mathbb{E}_{\rvx_0 \sim q(\rvx_0 | \rvx_t,\rvc)} [\rvx_0]) \label{eq:sg_3} \\
    & \approx \nabla_{\rvx_t} \log g \Big (\frac{1}{\sqrt{\bar{\alpha}_t}} \big (\rvx_t + (1-\bar{\alpha}_t) \rvs_{\bm{\theta}} (\rvx_t, \rvc, t) \big ) \Big ),  \label{eq:sg_4}
\end{align}
where $\bar{\alpha}_t \coloneqq \prod_{\tau=1}^t (1-\beta_\tau)$ is a constant determined by the variance schedule of the forward process. The detailed derivation is provided in \cref{app_subsec:derivation} and involves applying a first-order Taylor approximation and Tweedie's formula~\cite{efron2011tweedie}, similar to the approach used in the Universal Guidance.

From this derivation, we introduce the resulting safe guidance method as \textit{Safe Data Guidance (SDG)}:
\begin{align}
    \rvs_{\text{SDG}} (\rvx_t, \rvc, t) := \rvs_{\bm{\theta}} (\rvx_t, \rvc, t) + \nabla_{\rvx_t} \log g \Big (\frac{1}{\sqrt{\bar{\alpha}_t}} \big (\rvx_t + (1-\bar{\alpha}_t) \rvs_{\bm{\theta}} (\rvx_t, \rvc, t) \big ) \Big ). \label{eq:sdg}
\end{align}
SDG provides a straightforward approach to safe generation by encouraging the sample $\rvx_t$ to move in a direction that increases the safety score evaluated by $g$ at the expected denoised output. However, this method relies on the assumption that $g$ is exactly proportional to the safe probability $q(o=1|\rvx_0)$, rather than preserving the same order. Even if $g$ can preserve the relative order of safe and unsafe samples, the difference in the shape of the function can lead to a generated distribution that deviates from the original text-conditional distribution $q(\rvx_0|\rvc)$. We further analyze this issue in \cref{subsec:toy}.

\subsection{Safe text embedding guidance (STG)}
\label{subsec:safe_text_embedding}

As an alternative to direct guidance in the data space, we propose a method that applies guidance to the text embedding using the safety function $g$. In text-conditional generation, unsafe outputs often arise from malicious text prompts~\cite{yang2024guardti}. To address this, we aim to adjust the text embedding $\rvc$ toward a safer representation, encouraging the generation of safer images.

We define a \textit{time-dependent safety function} $g_t(\rvx_t, \rvc)$ that estimates the expected safety score of the final denoised output $\rvx_0$, given the current perturbed sample $\rvx_t$ and the text embedding $\rvc$:
\begin{align}
    g_t(\rvx_t, \rvc) := \mathbb{E}_{\rvx_0 \sim q(\rvx_0 | \rvx_t, \rvc)} [ g(\rvx_0) ]. \label{eq:tdsf}
\end{align}
This function captures the expected safety of the final image output, conditioned on the current noisy sample and the text representation.
To guide the text embedding $\rvc$ toward safer outputs, we apply gradient ascent on this time-dependent safety function $g_t(\rvx_t, \rvc)$. Specifically, we update $\rvc$ as: 
\begin{align}
    \rvc \leftarrow \rvc + \rho \nabla_{\rvc} g_t(\rvx_t, \rvc),
\end{align}
where $\rho$ is the scale hyperparameter that controls the strength of the safety adjustment. The updated text embedding is then used in the score network to perform diffusion sampling.

To make this guidance method tractable, we approximate the time-dependent safety function $g_t(\rvx_t, \rvc)$ using the same logic of the derivation in \cref{eq:sg_2,eq:sg_3,eq:sg_4}:
\begin{align}
    g_t(\rvx_t, \rvc) \approx g (\mathbb{E}_{\rvx_0 \sim q(\rvx_0 | \rvx_t, \rvc)} [ \rvx_0 ] ) \approx  g \Big (\frac{1}{\sqrt{\bar{\alpha}_t}} \big (\rvx_t + (1-\bar{\alpha}_t) \rvs_{\bm{\theta}} (\rvx_t, \rvc, t) \big ) \Big ). \label{eq:tdsf_approx}
\end{align}

Based on this tractable form, we propose \textit{Safe Text Embedding Guidance (STG)}, which uses the score network with the updated text embedding:
\begin{align}
    \rvs_{\text{STG}} (\rvx_t, \rvc, t) := \rvs_{\bm{\theta}} \Big (\rvx_t, \rvc + \rho \nabla_{\rvc} g \Big (\frac{1}{\sqrt{\bar{\alpha}_t}} \big (\rvx_t + (1-\bar{\alpha}_t) \rvs_{\bm{\theta}} (\rvx_t, \rvc, t) \big ) \Big ), t \Big ). \label{eq:stg}
\end{align}

\paragraph{Analysis of STG on data space}

Since STG applies guidance to the text embedding $\rvc$, it implicitly influences the perturbed data $\rvx_t$ as well. In \cref{thm:main}, we analyze the impact of STG, which updates the text embedding, from the perspective of the perturbed data $\rvx_t$.

\begin{restatable}{theorem}{thmmain}
\label{thm:main}
    Let $q_t(\rvx_t|\rvc)$ be the text-conditional distribution at diffusion timestep $t$, and $g_t(\rvx_t, \rvc)$ be a time-dependent safety function at $t$.
    If the text embedding $\rvc$ is updated using STG with the step size $\rho$, then the resulting score function can be expressed as:
    \begin{align}
        \nabla_{\rvx_t} \log q_t(\rvx_t | \rvc & + \rho \nabla_\rvc g_t(\rvx_t, \rvc)) \nonumber \\
        & = \underbrace{\nabla_{\rvx_t} \log q_t (\rvx_t | \rvc)}_{\text{original text-conditional score}} + \quad  \underbrace{\nabla_{\rvx_t} \{ \rho  \nabla_\rvc g_t(\rvx_t, \rvc)^T \nabla_\rvc \log q_t(\rvx_t|\rvc)  \} }_{\text{safe guidance}} + O(\rho^2).
    \end{align}
\end{restatable}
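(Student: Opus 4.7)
The plan is to prove \cref{thm:main} by a first-order Taylor expansion of the log-density in the text-embedding argument, followed by differentiation in $\rvx_t$. The expansion variable is the step size $\rho$, treating $\rvx_t$ and the base embedding $\rvc$ as fixed.

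First I would introduce the auxiliary scalar function
\begin{align}
    h(\rho) := \log q_t\!\left(\rvx_t \,\middle|\, \rvc + \rho \nabla_\rvc g_t(\rvx_t, \rvc)\right),
\end{align}
and observe that by the chain rule $h(0) = \log q_t(\rvx_t|\rvc)$ and
\begin{align}
    h'(0) = \nabla_\rvc g_t(\rvx_t, \rvc)^T \,\nabla_\rvc \log q_t(\rvx_t|\rvc).
\end{align}
Next I would apply Taylor's theorem to $h$ around $\rho = 0$, which gives
\begin{align}
    \log q_t\!\left(\rvx_t \,\middle|\, \rvc + \rho \nabla_\rvc g_t(\rvx_t, \rvc)\right) = \log q_t(\rvx_t|\rvc) + \rho \,\nabla_\rvc g_t(\rvx_t, \rvc)^T \nabla_\rvc \log q_t(\rvx_t|\rvc) + O(\rho^2).
\end{align}
Finally I would apply $\nabla_{\rvx_t}$ to both sides. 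The left-hand side is exactly the target score (note that $\rho \nabla_\rvc g_t(\rvx_t, \rvc)$ depends on $\rvx_t$, which is fine because we are computing the score of the parameterized distribution evaluated at the updated embedding). On the right-hand side, the first term yields the original text-conditional score, the second yields the claimed safe guidance term $\nabla_{\rvx_t}\{\rho \nabla_\rvc g_t(\rvx_t,\rvc)^T \nabla_\rvc \log q_t(\rvx_t|\rvc)\}$, and the third remains $O(\rho^2)$.

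The main obstacle is the last step: ensuring that differentiating the remainder term in $\rvx_t$ preserves the $O(\rho^2)$ order. This does not follow from Taylor's theorem alone; it requires sufficient regularity of $q_t(\rvx_t|\rvc)$ jointly in $(\rvx_t, \rvc)$ so that the remainder in the integral (Lagrange) form of Taylor's theorem admits a uniform bound whose $\rvx_t$-gradient is still $O(\rho^2)$. Under standard smoothness assumptions, namely that $\log q_t$ is $C^2$ in $\rvc$ with mixed partial derivatives $\nabla_{\rvx_t}\nabla_\rvc^2 \log q_t$ bounded locally uniformly, the remainder may be written as $\tfrac{\rho^2}{2}\, \nabla_\rvc g_t(\rvx_t,\rvc)^T \nabla_\rvc^2 \log q_t(\rvx_t|\tilde{\rvc})\, \nabla_\rvc g_t(\rvx_t,\rvc)$ for some intermediate $\tilde{\rvc}$, whose $\rvx_t$-gradient is controlled by $\rho^2$ times these bounded quantities. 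I would state these regularity conditions at the start of the proof (consistent with the smoothness already implicit in the score-matching formulation of \cref{subec:diffusion}) so that the interchange of $\nabla_{\rvx_t}$ with the Taylor remainder is rigorously justified, after which the decomposition claimed in the theorem follows immediately.
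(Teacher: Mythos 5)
Your proposal is correct and follows essentially the same route as the paper's proof: a first-order Taylor expansion of $\log q_t(\rvx_t \mid \cdot)$ in the embedding argument around $\rvc$, followed by applying $\nabla_{\rvx_t}$ to both sides. Your added discussion of the regularity needed for the $\rvx_t$-gradient of the Taylor remainder to stay $O(\rho^2)$ is a point the paper's proof passes over silently, and is a welcome refinement rather than a deviation.
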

The proof is provided in \cref{app_subsec:thm_main}. The adjusted score function $\nabla_{\rvx_t} \log q_t(\rvx_t | \rvc + \rho \nabla_\rvc g_t(\rvx_t, \rvc))$ is decomposed into the original text-conditional score and the safe guidance term, analogous to SG in \cref{eq:safe_guidance}. Therefore, STG can be interpreted as a form of SG in which the safe conditional probability $q_t(o=1|\rvx_t,\rvc)$ is defined by the alignment between the gradient of the safety function $g_t$ and the text-conditional likelihood $q_t(\rvx_t|\rvc)$:
\begin{align}
    q_t^{\text{STG}}(o=1|\rvx_t,\rvc) \appropto \exp \Big ( \rho  \nabla_\rvc g_t(\rvx_t, \rvc)^T \nabla_\rvc \log q_t(\rvx_t|\rvc)  \Big ). \label{eq:safe_prob_stg}
\end{align}
This implies that STG sets the safe probability for intermediate samples by aligning the underlying model likelihood with the desired safety objective. As a result, STG simultaneously preserves the original distribution of the base model and guides the generation toward safer outputs. This approach maintains the core semantics of the generated content while reducing the likelihood of unsafe results.

\subsection{Comparison between SDG and STG}
\label{subsec:toy}

\begin{figure}[t]
    \centering
    \begin{subfigure}[b]{\linewidth}
        \centering
        \includegraphics[width=0.97\linewidth]{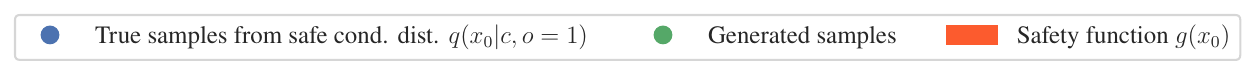}
    \end{subfigure}
    
    \centering
    \begin{subfigure}[b]{0.19\linewidth}
        \centering
        \includegraphics[width=\linewidth]{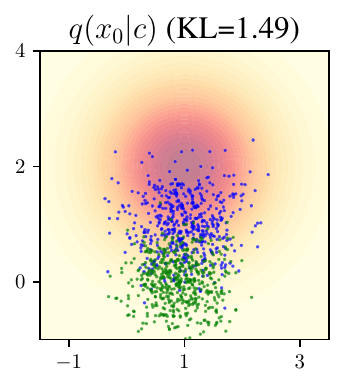}
        \caption{No safe guidance}
        \label{subfig:toy_0}
    \end{subfigure}
    \hfill 
    \begin{subfigure}[b]{0.375\linewidth}
        \centering
        \includegraphics[width=\linewidth]{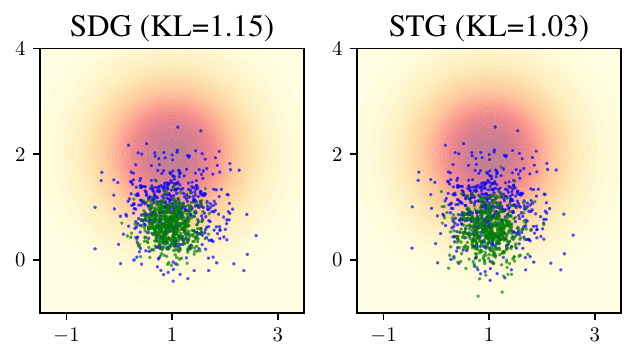}
        \caption{Safe guidance using $g^*$}
        \label{subfig:toy_1}
    \end{subfigure}
    \hfill 
    \begin{subfigure}[b]{0.375\linewidth}
        \centering
        \includegraphics[width=\linewidth]{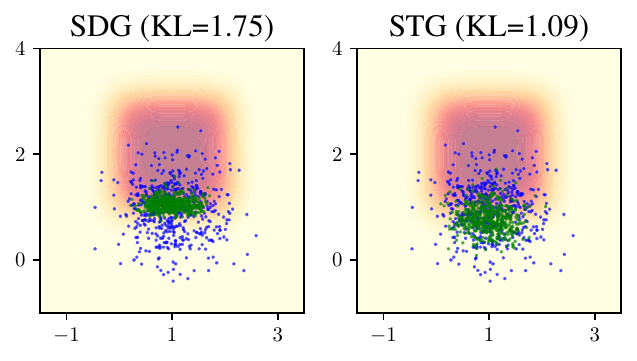}
        \caption{Safe guidance using $\tilde{g}$}
        \label{subfig:toy_2}
    \end{subfigure}
    \caption{Generated samples from the 2D toy example with condition $\rvc=(1,0)$ using SDG and STG with different safety functions. $g^*$ is the ideal safety function, proportional to the true safe distribution $p(o=1|\rvx_0)$, while the approximated safety function $\tilde{g}$ preserves relative order but differs in shape. The blue dots represent samples from the true safe conditional distribution $q(\rvx_0|\rvc,o=1)$, and the green dots indicate instances generated using each guidance method. The background heatmap shows the contours of the respective safety functions. The value in parentheses in each figure title indicates the KL divergence between the true safe conditional distribution and the generated samples.}
    \label{fig:toy}
\end{figure}

\textbf{Toy experiment setup}~~
To compare safe guidance variants, we use a 2D toy example where the true target distribution and guidance terms can be computed tractably. The conditional distribution is defined as a 2D Gaussian with the condition $\rvc\in \mathbb{R}^2$ as its mean: $q(\rvx_0|\rvc) = \mathcal{N}(\rvx_0 ; \rvc, \rmI)$. The safe distribution is defined as $q(o=1|\rvx_0) \propto \exp (-\frac{1}{2} ||\rvx_0 - \bm{\mu}||^2 )$ where $\bm{\mu}=(1, 2)$, as shown in \cref{subfig:toy_0}. Under this setup, the true safe conditional distribution is $q(\rvx_0 | \rvc, o=1) = \mathcal{N} (\rvx_0 ; \frac{1}{2} (\rvc + \bm{\mu}), \frac{1}{2}\rmI)$.

We consider two safety functions: 1) \textit{ideal} safety function $g^*(\rvx_0) = \exp( -\frac{1}{2} ||\rvx_0 -\bm{\mu}||^2)$, which is proportional to the true safe distribution, satisfying the assumptions of SDG; and 2) \textit{approximated} safety function $\tilde{g}(\rvx_0) = \exp( -\frac{1}{2} ||\rvx_0 -\bm{\mu}||^4) $, which preserves the relative safety ordering but deviates in its shape. The contours of each safety function are shown in \cref{subfig:toy_1,subfig:toy_2}.

\textbf{Results}~~
We present the generated samples using different safety functions for each guidance method in \cref{fig:toy}. Without guidance (\cref{subfig:toy_0}), samples are drawn from the original $\rvc$-conditional distribution, centered around $\rvc=(1, 0)$.
As safe guidance is applied (\cref{subfig:toy_1,subfig:toy_2}), instances shift toward the safe region as expected. With $g^*$, SDG effectively guides the samples to the correct safe region because it fully satisfies the assumption required for accurate guidance. In contrast, with $\tilde{g}$, SDG produces more biased samples, as indicated by the larger KL divergence. This is because SDG directly relies on the provided safety function without correcting for potential mismatches with the true safe distribution. As a result, SDG may push samples toward regions that satisfy $\tilde{g}$  but deviate from the true safe $\rvc$-conditional distribution.

In contrast to the biased generation in the case of SDG using $\tilde{g}$, STG shows more robust performance with both safety functions, as it accounts for both the underlying model likelihood and the safe direction. This generates samples that better preserve the underlying model distribution, reducing mode collapse and improving overall sample quality.

\section{Experiments}
\label{sec:exp}
\subsection{Experimental settings}
\label{subsec:exp_setting}

\begin{figure}[t]
    \centering
    \begin{subfigure}[b]{\linewidth}
        \centering
        \includegraphics[width=0.9\linewidth]{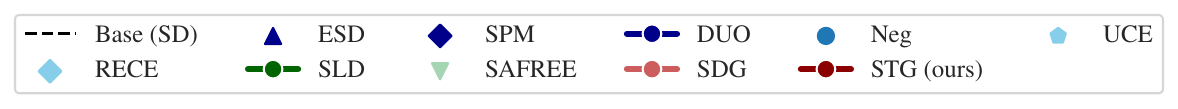}
    \end{subfigure}
    \begin{subfigure}[b]{0.32\linewidth}
        \centering
        \includegraphics[width=\linewidth]{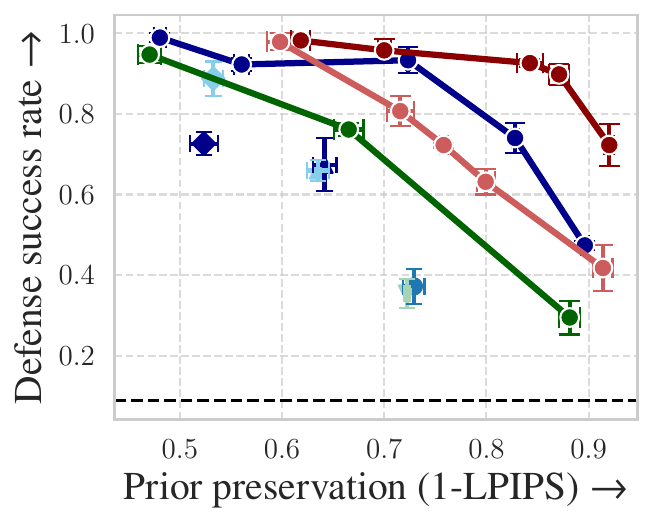}
        \caption{Nudity: Ring-A-Bell}
        \label{subfig:black_ringabell}
    \end{subfigure}
    \hfill 
    \begin{subfigure}[b]{0.32\linewidth}
        \centering
        \includegraphics[width=\linewidth]{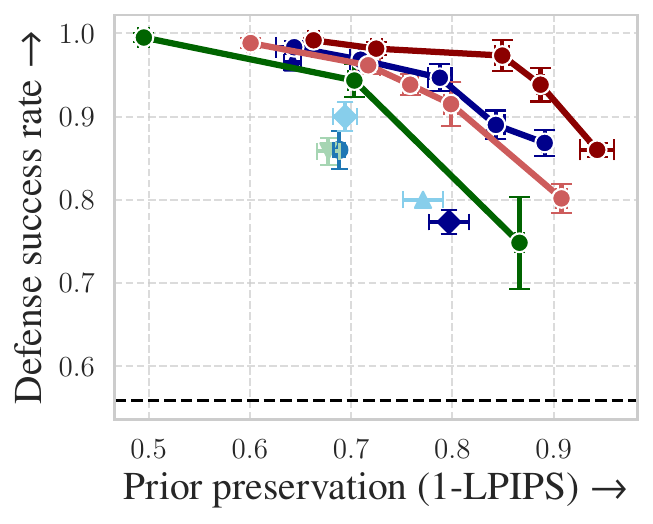}
        \caption{Nudity: SneakyPrompt}
        \label{subfig:black_sneakyprompt}
    \end{subfigure}
    \hfill 
    \begin{subfigure}[b]{0.32\linewidth}
        \centering
        \includegraphics[width=\linewidth]{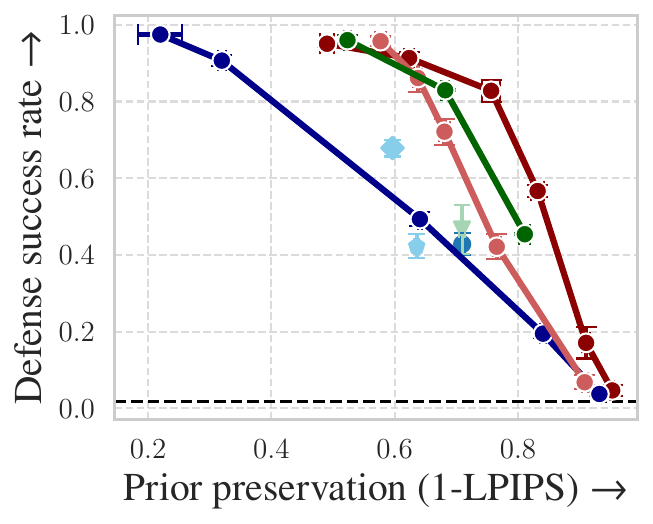}
        \caption{Violence: Ring-A-Bell}
        \label{subfig:black_vio}
    \end{subfigure}
    \caption{Trade-off between defense success rate and prior preservation on nudity and violence. Each experiment is repeated three times with different random seeds, and the mean values are shown as points while the standard deviations are indicated by error bars.}
    \label{fig:nudity}
\end{figure}

\textbf{Setup}~~
Following previous work~\cite{parkdirect,yoon2024safree}, we mainly use the publicly available Stable Diffusion v1.4~\cite{rombach2022high} as the backbone architecture. Sampling is performed with a DDIM sampler~\cite{song2021denoising} with 50 steps and a classifier-free guidance scale of 7.5. To further evaluate the generalization ability of our approach, we additionally conduct experiments with diverse backbone models, including FLUX~\cite{flux2024}, SDXL~\cite{podell2024sdxl}, SD3~\cite{esser2024scaling}, and PixArt-$\alpha$~\cite{chen2024pixartalpha}, as well as with different samplers, such as DDPM~\cite{ho2020denoising}.

We evaluate our method on \textit{nudity} and \textit{violence} using both black-box and white-box red-teaming protocols, following \cite{parkdirect}. For black-box attacks, we use Ring-A-Bell~\cite{tsai2023ring} (95 nudity and 250 violence prompts) and SneakyPrompt~\cite{yang2024sneakyprompt} (200 nudity prompts). 
For white-box attacks, we adopt Concept Inversion~\cite{pham2023circumventing}, where a special token <c> is learned via textual inversion to bypass safety mechanisms. This setup tests whether training-free methods can provide an additional defense when combined with the training-based approach.
Following \cite{gandikota2023erasing,yoon2024safree}, we also evaluate the models on an \textit{artist-style removal} task using two sets of 100 prompts, each consisting of 20 prompts for five different artist styles that Stable Diffusion is known to mimic. Further details are in \cref{app_subsec:exp_setup}.

\textbf{Implementation details for STG}~~
We define the safety function $g$ for STG as follows. For \textit{nudity}, $g$ is set to the negative sum of the confidence scores of bounding boxes labeled as nudity by the NudeNet detector~\cite{bedapudi2019nudenet}. For \textit{violence}, $g$ is defined as the negative CLIP score \cite{hessel2021clipscore} between the generated image and a pre-defined violence-related text. For \textit{artist-style removal}, $g$ is computed as the difference between the CLIP score of the image with the text `art' and that with the target artist's name.

To control the strength of the safety guidance, we adjust the update scale hyperparameter $\rho$. Additionally, we introduce two hyperparameters, the update threshold $\tau$ and the update step ratio $\gamma$, to reduce computational cost. The threshold $\tau$ determines whether guidance is applied based on the estimated safety value at each diffusion timestep, and the ratio $\gamma$ specifies how frequently the safety update is performed during sampling, providing a controllable trade-off between efficiency and safety performance. The detailed hyperparameter settings are provided in \cref{app_subsec:exp_detail_stg}.

\textbf{Baseline}~~
We compare our method with both training-free and training-based safety approaches. For training-free baselines, we include UCE~\cite{gandikota2024unified}, RECE~\cite{gong2024reliable}, SLD~\cite{schramowski2023safe}, and SAFREE~\cite{yoon2024safree}. In addition, we also evaluate Negative Prompt, which replaces the null prompt with an unsafe prompt in the classifier-free guidance framework, as well as SDG proposed in \cref{subsec:safe_data_guidance}. For training-based methods, we evaluate against ESD~\cite{gandikota2023erasing}, SPM~\cite{lyu2024one}, and DUO~\cite{parkdirect}. Detailed descriptions of these baselines and their implementations can be found in \cref{app_subsec:baseline}. 

\textbf{Evaluation}~~
We measure the performance of our method using the following key metrics. (1) \textit{Defense success rate} (DSR) measures the effectiveness of the safety mechanism in suppressing sensitive content. For nudity, DSR is calculated using the NudeNet Detector~\cite{bedapudi2019nudenet}. An image is considered safe if the nudity labels are not detected. For violence, we use GPT-4o~\cite{hurst2024gpt} to assess whether the generated content is potentially offensive or distressing, based on a prompt from the previous work~\cite{parkdirect}. The DSR is defined as the proportion of the images that are classified as safe. To validate the robustness of our evaluation metrics, we further report alternative results (Falconsai NSFW image classifier~\cite{falconsai2024nsfw} for nudity, Q16 classifier~\cite{schramowski2022can} for violence) in \cref{app_subsec:falconsai}. (2) \textit{Prior Preservation} (PP) measures the level of maintenance of the original generative capabilities by evaluating the perceptual similarity between outputs from the original model and those generated with safety methods. PP is computed as the average value of $1-\text{LPIPS}$, where LPIPS~\cite{zhang2018unreasonable} measures the perceptual distance between paired images. (3) \textit{General generation quality} is assessed using zero-shot FID \cite{heusel2017gans} and CLIP score on 3,000 images generated from randomly sampled captions in the COCO validation set, capturing overall image fidelity and text-image alignment. When FID is computed using 1,000 generated images, we denote it as FID-1K. Further details on evaluation protocols can be found in \cref{app_subsec:eval_details}. 

\begin{figure}[tp]
    \centering
    \includegraphics[width=0.92\linewidth]{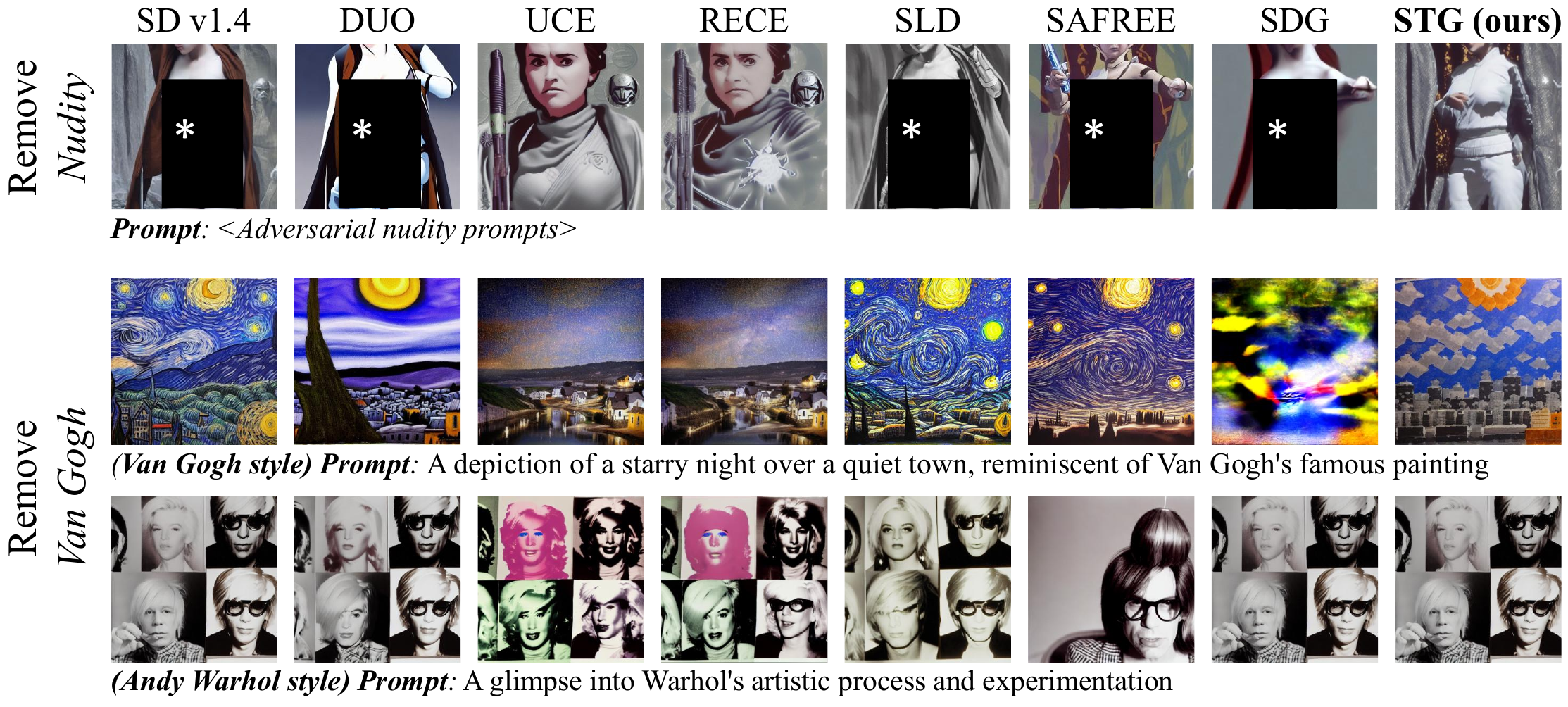}
    \caption{Generated images from STG and other safe generation baselines for nudity and artist-style removal scenarios. For publication purposes, the generated images are masked.}
    \label{fig:example}
\end{figure}

\subsection{Experimental results}
\label{subsec:exp_result}

\begin{figure}[t]
    \begin{minipage}[t]{0.43\textwidth}
        \centering
        \captionof{table}{Results for generation quality on the COCO dataset across various safe generation methods applied for nudity removal.}
        \adjustbox{max width=\linewidth}{%
        \begin{tabular}{clcc}
            \toprule
            & Method  & FID $\downarrow$ & CLIP $\uparrow$  \\
            \midrule
            & Base (SD v1.4) & 23.22 & 31.96  \\
            \midrule
            \multirow{3}{*}{\shortstack{Training\\-based}} & ESD~\cite{gandikota2023erasing}&22.85&30.02 \\
            & SPM~\cite{lyu2024one} &23.53&31.67\\
            & DUO~\cite{parkdirect} &23.27&\bf{31.90}\\
            \midrule
            \multirow{7}{*}{\shortstack{Training\\-free}}  &  Negative Prompt &24.83&31.01\\
            & UCE~\cite{gandikota2024unified}  &23.20 & 31.71\\
            & RECE~\cite{gong2024reliable}  & 23.15 & 31.07 \\
            & SLD~\cite{schramowski2023safe}  &24.32&31.29\\
            & SAFREE~\cite{yoon2024safree}  &28.39&30.27\\
            \cmidrule{2-4}
            & SDG  &26.90&29.97\\
            & \textbf{STG (ours)}  &\bf{22.00} &31.14\\
            \bottomrule
        \end{tabular}
        }
        \label{tab:coco}
    \end{minipage}
    \hfill
    \begin{minipage}[t]{0.55\textwidth}
        \vspace{-1em}
        \begin{figure}[H]
            \centering
            \begin{subfigure}[b]{0.95\linewidth}
                \centering
                \includegraphics[width=\linewidth]{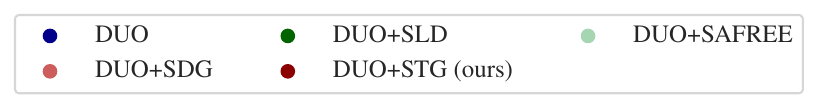}
            \end{subfigure}
            \begin{subfigure}[b]{0.48\linewidth}
                \centering
                \includegraphics[width=\linewidth]{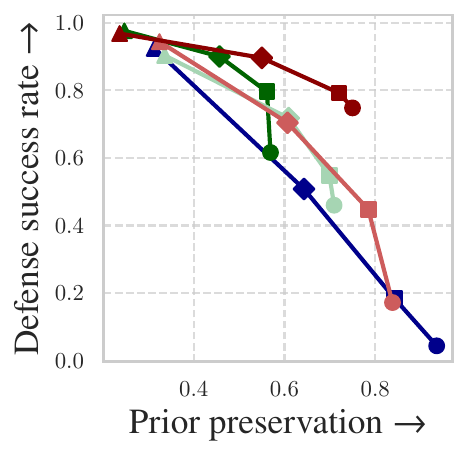}
                \caption{Ring-A-Bell}
                \label{fig:vio_duo}
            \end{subfigure}
            \hfill 
            \begin{subfigure}[b]{0.48\linewidth}
                \centering
                \includegraphics[width=\linewidth]{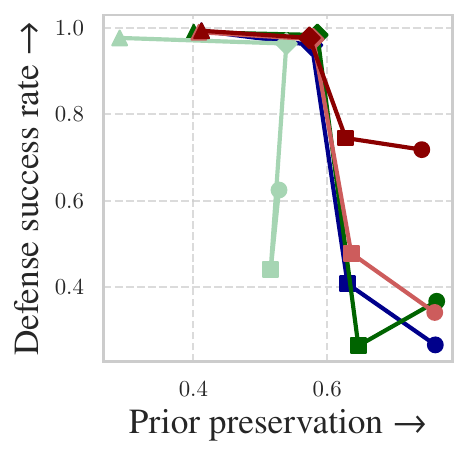}
                \caption{Concept Inversion}
                \label{fig:white_vio_duo}
            \end{subfigure}
            \caption{Results for DUO with training-free methods on violence, with fixed parameters for those methods.}
            \label{fig:with_duo}
        \end{figure}
    \end{minipage}
\end{figure}

\begin{table}[tp]
    \centering
    \caption{Results across backbone models (FLUX, SDXL, SD3) and the fast generation model LCM, demonstrating the generalization ability of STG. DSR and PP are reported on Ring-A-Bell (violence), while COCO FID-1K and CLIP score measure general image quality.}
    \begin{tabular}{lcccccccc}
        \toprule
        & \multicolumn{4}{c}{FLUX~\cite{flux2024}} & \multicolumn{4}{c}{SD3~\cite{esser2024scaling}} \\
        \cmidrule(lr){2-5}\cmidrule(lr){6-9}
        Method & DSR$\uparrow$ & PP$\uparrow$& FID-1K$\downarrow$ & CLIP$\uparrow$ & DSR$\uparrow$ & PP$\uparrow$& FID-1K$\downarrow$ & CLIP$\uparrow$  \\
        \midrule
        Base    & 0.11 & - & 56.58 & 32.67   & 0.12 & - & 53.70 & 33.44      \\
        \midrule
        \bf{STG (ours)}\\
        ~$\tau=0.20$ & 0.28 & 0.85 & 56.59 & 32.67  & 0.42 & 0.76 & 53.89 & 33.42   \\
        ~$\tau=0.18$ & 0.53 & 0.70 & 56.52 & 32.60 & 0.54 & 0.67 & 53.65 & 33.33  \\
        ~$\tau=0.16$ & 0.70 & 0.60 & 57.77 & 32.00 & 0.68 & 0.57 & 54.91 & 32.93   \\
        \midrule
        & \multicolumn{4}{c}{SDXL~\cite{podell2024sdxl}} & \multicolumn{4}{c}{LCM~\cite{luo2023latent}} \\
        \cmidrule(lr){2-5}\cmidrule(lr){6-9}
        Method & DSR$\uparrow$ & PP$\uparrow$& FID-1K$\downarrow$ & CLIP$\uparrow$ & DSR$\uparrow$ & PP$\uparrow$& FID-1K$\downarrow$ & CLIP$\uparrow$  \\
        \midrule
        Base    & 0.04 & - & 48.97 & 33.80  & 0.02 & - &  60.87 & 30.19   \\
        \midrule
        \bf{STG (ours)}\\
        ~$\tau=0.20$ & 0.25 & 0.88 & 49.24 & 33.78 & 0.23 & 0.66 & 60.96 & 30.13  \\
        ~$\tau=0.18$ & 0.50 & 0.80 & 49.11 & 33.66  & 0.52 & 0.59 & 61.05 & 30.06\\
        ~$\tau=0.16$ & 0.77 & 0.80 & 49.44 & 33.02 & 0.80 & 0.48 & 62.32 & 29.23 \\
        \bottomrule
    \end{tabular}
    \label{tab:gen}
\end{table}

\paragraph{Nudity and violence}

\cref{fig:nudity} presents the quantitative results for the nudity and violence scenarios, illustrating the trade-off between DSR and PP across various black-box attacks. To verify the significance of these results, we repeat each experiment three times with different random seeds, which correspond to variations in the initial noise during sampling.
The mean values and standard deviations of DSR and PP are shown as points and error bars, respectively.
Examples of generated images are shown in \cref{fig:example}. STG consistently occupies the upper-right corner of the trade-off curve, indicating its superior ability to effectively filter unsafe concepts while preserving original information that is unrelated to safety.
In the nudity cases (\cref{subfig:black_ringabell,subfig:black_sneakyprompt}), the training-based DUO achieved the best performance among the baselines, but its effectiveness is reduced on violence (\cref{subfig:black_vio}). As noted in the DUO work, this is likely due to the diverse categories of violence, which are harder to capture all potentially unsafe concepts through training. In contrast, SDG and STG demonstrate strong performance in both scenarios, leveraging test-time CLIP score guidance based on violence-related text to better handle the broader range of violence concepts.

\cref{tab:coco} shows the quantitative results on the COCO dataset, which generally contains safe prompts. STG demonstrates superior generation quality, achieving the lowest FID, even outperforming the base model, though with a slight drop in CLIP score due to the text embedding modification. This suggests that STG preserves the overall diversity and realism of the original model, benefiting from the likelihood-preserving term in its guidance, as discussed in \cref{subsec:safe_text_embedding}. Therefore, STG effectively filters unsafe content while minimizing unintended degradation of the model's generative capacity. 

Since training-free methods can be combined with training-based approaches, we conduct experiments applying various training-free methods to DUO~\cite{parkdirect}, the best-performing training-based safe generation methods. \cref{fig:with_duo} presents the results for the violence under both black-box and white-box red teaming, where each training-free method is applied to DUO models with different parameter settings, while keeping the parameters of the training-free methods fixed. In these settings, our method outperforms other training-free baselines, highlighting its adaptability.

To evaluate the generalization ability of STG, we further test it on recent diffusion backbones, FLUX~\cite{flux2024}, SDXL~\cite{podell2024sdxl}, and SD3~\cite{esser2024scaling}, using their default configurations on the Ring-A-Bell (violence) benchmark. We also include PixArt-$\alpha$~\cite{chen2024pixartalpha} with DPM-Solver~\cite{lu2022dpm}, whose results are provided in \cref{tab:pixart} of \cref{app_subsec:pixart}. As summarized in \cref{tab:gen}, the base models still produce harmful outputs for violence-related prompts, while STG consistently improves DSR while maintaining comparable overall generation quality. These results demonstrate strong generalization across diverse backbones, with a controllable safety-quality trade-off via the scale hyperparameter $\rho$. Moreover, STG integrates seamlessly with fast generation models such as LCM~\cite{luo2023latent}, since STG only requires access to the mean predicted images at intermediate timesteps, which are readily available in most diffusion frameworks. Additional experiments with different samplers, including DDPM~\cite{ho2020denoising}, demonstrate that STG remains robust across sampling strategies, as shown in \cref{fig:ddpm} of \cref{app_subsec:ddpm}.

\paragraph{Artist-style removal}

\begin{table}[tp]
    \centering
    \caption{Quantitative comparison of artist-style removal on famous (left) and modern (right) artists.}
    \resizebox{\textwidth}{!}{%
    \begin{tabular}{lcccccccc}
        \toprule
        & \multicolumn{4}{c}{Remove ``Van Gogh''} & \multicolumn{4}{c}{Remove ``Kelly McKernan''} \\
        \cmidrule(lr){2-5}\cmidrule(lr){6-9}
        Method & $\text{LPIPS}_e$$\uparrow$ & $\text{LPIPS}_u$$\downarrow$ & $\text{ACC}_e$$\downarrow$ & $\text{ACC}_u$$\uparrow$
               & $\text{LPIPS}_e$$\uparrow$ & $\text{LPIPS}_u$$\downarrow$ & $\text{ACC}_e$$\downarrow$ & $\text{ACC}_u$$\uparrow$ \\
        \midrule
        Base (SD v1.4)                         & --   & --   & 1.00 & 0.89 & --   & --   & 0.90 & 0.71 \\
        \midrule
        DUO~\cite{parkdirect}                  & 0.38 & 0.17 & 0.60 & 0.90 & 0.42 & 0.26 & 0.55 & 0.70 \\
        \midrule
        UCE~\cite{gandikota2024unified}        & 0.36 & 0.18 & 0.45 & \bf{0.95} & 0.40 & 0.17 & 0.35 & 0.73 \\
        RECE~\cite{gong2024reliable}           & 0.36 & 0.19 & 0.60 & 0.93 & 0.42 & 0.17  &  0.25  &  0.71 \\
        SLD~\cite{schramowski2023safe}  & 0.28 & 0.12 & 0.60 & 0.81 & 0.22 & 0.18 & 0.50 & \bf{0.74} \\
        SAFREE~\cite{yoon2024safree}           & 0.39 & 0.25 & 0.45 & 0.75 & 0.47 & 0.46 & 0.25 & 0.71 \\
        \midrule
        SDG                                    & 0.43 & 0.09 & 0.30 & 0.83 & 0.42 & 0.11 &  0.30  &  0.68  \\
        \textbf{STG (ours)}                    & \bf{0.46} & \bf{0.08} & \bf{0.30} & 0.85 & \bf{0.58} & \bf{0.10} & \bf{0.10}  &  0.65  \\
        \bottomrule
    \end{tabular}%
    }
    \label{tab:main_artist}
\end{table}

To evaluate artist-style removal, we follow the protocol from \cite{yoon2024safree} using two metrics: LPIPS and ACC. LPIPS measures the average perceptual distance between images from the base model and those produced by the safe method. ACC is the average accuracy with which GPT-4o identifies the specified artist style in the prompt. The subscripts ``\textit{e}'' and ``\textit{u}'' on each metric denote the evaluated prompt sets, which are ``\textit{erased}'' (target style removed) and ``\textit{unerased}'' (other styles), respectively. High $\text{LPIPS}_e$ and low $\text{ACC}_e$ indicate effective target style removal. Low $\text{LPIPS}_u$ and high $\text{ACC}_u$ show preservation of non-targeted styles, maintaining the original model's capabilities.

\cref{tab:main_artist} reports the quantitative results for each safe method. Examples of images generated from erased and unerased prompts are provided in \cref{fig:example}. Both SDG and STG effectively remove the target style while retaining other styles, compared to all baselines. This success stems from measuring a safety value on intermediate latents. Consequently, our approach can consistently remove the target artist's style at test time without degrading the model's overall generative performance.

We also demonstrate in \cref{app_subsec:bias} that STG can be flexibly extended to bias mitigation tasks.

\subsection{Analysis of STG}
\label{subsec:abl}

\begin{figure}[t]
    \centering
    \begin{minipage}[t]{0.5\textwidth}
    \begin{figure}[H]
        \includegraphics[width=\linewidth]{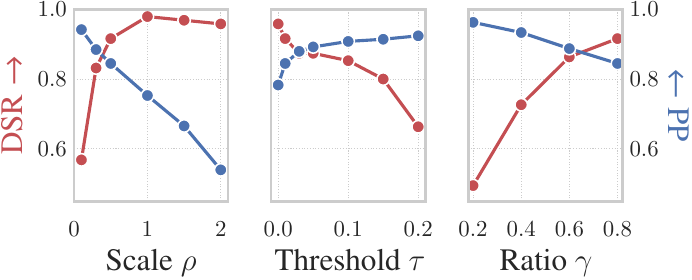}
        \caption{Sensitivity analysis of STG on Ring-A-Bell (nudity) with respect to the scale hyperparameter~$\rho$, update threshold~$\tau$, and update step ratio~$\gamma$.}
        \label{fig:sens}
    \end{figure}
    \end{minipage}
    \hfill
    \begin{minipage}[t]{0.46\textwidth}
        \captionof{table}{Sampling time (s/batch, batch size=4) and GPU memory usage (GB) with FP16.}
        \setlength{\tabcolsep}{5pt}
        \adjustbox{max width=\linewidth}{%
            \begin{tabular}{lccccc}
                \toprule
                Method & FP16 & Time & Memory & DSR$\uparrow$ & PP$\uparrow$    \\
                \midrule
                Base         & \xmark   & 15.8 & 8.23 & 0.08 & -      \\
                \midrule
                SLD~\cite{schramowski2023safe} & \xmark & 22.7 & 13.5  & 0.76 & 0.65 \\ 
                SAFREE~\cite{yoon2024safree}  & \xmark & 23.2 & 13.6 & 0.36 & 0.73 \\
                \midrule
                \bf{STG (ours)}\\
                ~$\rho=2.0$, $\tau=0.15$  & \xmark &  23.9 & 45.4 & 0.79 & \bf{0.90} \\
                  & \cmark &  14.0 & 22.8 & 0.79 & 0.89 \\
                ~$\rho=2.0$, $\tau=0.40$  & \xmark & 30.4   & 45.4   & 0.88 & 0.84    \\
                 & \cmark & 20.7   & 22.8   & \bf{0.92} & 0.84    \\
                ~$\rho=0.5$, $\tau=0.80$  & \xmark &  50.8  & 45.4  & \bf{0.92} & 0.84  \\
                & \cmark &  35.0  & 22.8  & 0.91 & 0.84  \\
                \bottomrule
            \end{tabular}
        \label{tab:time}
        }
    \end{minipage}
    \vspace{-.5em}
\end{figure}

We conduct sensitivity analyses of STG hyperparameters on the Ring-A-Bell (nudity), with the results shown in \cref{fig:sens}. The scale hyperparameter $\rho$ controls the strength of the guidance applied to the text embeddings. As $\rho$ increases, the guidance effect becomes stronger, resulting in higher DSR but greater deviation from the original image. This enables adjustment of the modification strength based on the desired safety level. The update threshold $\tau$ sets the minimum safety value for applying an update at each sampling step. Lower $\tau$ values increase the update frequency, leading to higher DSR but lower PP. A similar trend is observed with the update step ratio $\gamma$, which controls the proportion of steps where updates are applied. These hyperparameters impact the overall generation time, as shown in \cref{tab:time}, which reports sampling times for the training-free methods. The results show that our method achieves superior performance even with comparable sampling times.

To further address computational efficiency, we analyze the effect of half-precision (FP16) inference during sampling. The additional inference cost of STG primarily arises from the gradient computations required to update the text embeddings. As shown in \cref{tab:time}, applying FP16 inference substantially reduces runtime and GPU memory usage while preserving the safety performance of STG. This demonstrates that common time- and memory-efficient techniques can effectively mitigate the computational overhead of STG, enabling practical deployment.

\section{Conclusion}
\label{sec:conc}

In this paper, we introduce Safe Text embedding Guidance (STG), a training-free method designed for safe text-to-image diffusion models by dynamically guiding text embeddings during the sampling process. Unlike previous methods that require retraining or input filtering, STG applies a safety function directly to the expected denoised outputs, effectively guiding the generation process toward safer content without additional training overhead. Our theoretical analysis shows that STG effectively aligns the model distribution with safety constraints, reducing unsafe outputs while preserving semantic integrity. Comprehensive experiments demonstrate that STG consistently outperforms both training-based and training-free baselines across various safety-critical scenarios.

\begin{ack}


This work was supported by the IITP (Institute of Information \& Communications Technology Planning \& Evaluation)-ITRC (Information Technology Research Center) grant funded by the Korea government (Ministry of Science and ICT) (IITP-2025-RS-2024-00437268).

\end{ack}

\bibliography{main}

\begin{thebibliography}{10}

\bibitem{bansal2023universal}
Arpit Bansal, Hong-Min Chu, Avi Schwarzschild, Soumyadip Sengupta, Micah Goldblum, Jonas Geiping, and Tom Goldstein.
\newblock Universal guidance for diffusion models.
\newblock In {\em Proceedings of the IEEE/CVF Conference on Computer Vision and Pattern Recognition}, pages 843--852, 2023.

\bibitem{bedapudi2019nudenet}
P.~Bedapudi.
\newblock Nudenet: Lightweight nudity detection.
\newblock \url{https://github.com/notAI-tech/NudeNet/}, 2019.

\bibitem{birhane2023into}
Abeba Birhane, Sanghyun Han, Vishnu Boddeti, Sasha Luccioni, et~al.
\newblock Into the laion’s den: Investigating hate in multimodal datasets.
\newblock {\em NeurIPS Datasets and Benchmarks Track}, 2023.

\bibitem{birhane2021multimodal}
Abeba Birhane, Vinay~Uday Prabhu, and Emmanuel Kahembwe.
\newblock Multimodal datasets: misogyny, pornography, and malignant stereotypes.
\newblock {\em arXiv preprint arXiv:2110.01963}, 2021.

\bibitem{changpinyo2021conceptual}
Soravit Changpinyo, Piyush Sharma, Nan Ding, and Radu Soricut.
\newblock Conceptual 12m: Pushing web-scale image-text pre-training to recognize long-tail visual concepts.
\newblock In {\em Proceedings of the IEEE/CVF conference on computer vision and pattern recognition}, pages 3558--3568, 2021.

\bibitem{chen2024pixartalpha}
Junsong Chen, Jincheng YU, Chongjian GE, Lewei Yao, Enze Xie, Zhongdao Wang, James Kwok, Ping Luo, Huchuan Lu, and Zhenguo Li.
\newblock Pixart-\${\textbackslash}alpha\$: Fast training of diffusion transformer for photorealistic text-to-image synthesis.
\newblock In {\em The Twelfth International Conference on Learning Representations}, 2024.

\bibitem{chung2024scaling}
Hyung~Won Chung, Le~Hou, Shayne Longpre, Barret Zoph, Yi~Tay, William Fedus, Yunxuan Li, Xuezhi Wang, Mostafa Dehghani, Siddhartha Brahma, et~al.
\newblock Scaling instruction-finetuned language models.
\newblock {\em Journal of Machine Learning Research}, 25(70):1--53, 2024.

\bibitem{chung2024prompt}
Hyungjin Chung, Jong~Chul Ye, Peyman Milanfar, and Mauricio Delbracio.
\newblock Prompt-tuning latent diffusion models for inverse problems.
\newblock In Ruslan Salakhutdinov, Zico Kolter, Katherine Heller, Adrian Weller, Nuria Oliver, Jonathan Scarlett, and Felix Berkenkamp, editors, {\em Proceedings of the 41st International Conference on Machine Learning}, volume 235 of {\em Proceedings of Machine Learning Research}, pages 8941--8967. PMLR, 21--27 Jul 2024.

\bibitem{dhariwal2021diffusion}
Prafulla Dhariwal and Alexander Nichol.
\newblock Diffusion models beat gans on image synthesis.
\newblock {\em Advances in Neural Information Processing Systems}, 34:8780--8794, 2021.

\bibitem{efron2011tweedie}
Bradley Efron.
\newblock Tweedie’s formula and selection bias.
\newblock {\em Journal of the American Statistical Association}, 106(496):1602--1614, 2011.

\bibitem{esser2024scaling}
Patrick Esser, Sumith Kulal, Andreas Blattmann, Rahim Entezari, Jonas M{\"u}ller, Harry Saini, Yam Levi, Dominik Lorenz, Axel Sauer, Frederic Boesel, et~al.
\newblock Scaling rectified flow transformers for high-resolution image synthesis.
\newblock In {\em Forty-first International Conference on Machine Learning}, 2024.

\bibitem{falconsai2024nsfw}
Falcons.ai.
\newblock Falconsai nsfw image detection model.
\newblock \url{https://huggingface.co/Falconsai/nsfw_image_detection}, 2024.
\newblock Hugging Face, Accessed: 2025-10-09.

\bibitem{gandikota2023erasing}
Rohit Gandikota, Joanna Materzynska, Jaden Fiotto-Kaufman, and David Bau.
\newblock Erasing concepts from diffusion models.
\newblock In {\em Proceedings of the IEEE/CVF International Conference on Computer Vision}, pages 2426--2436, 2023.

\bibitem{gandikota2024unified}
Rohit Gandikota, Hadas Orgad, Yonatan Belinkov, Joanna Materzy{\'n}ska, and David Bau.
\newblock Unified concept editing in diffusion models.
\newblock In {\em Proceedings of the IEEE/CVF Winter Conference on Applications of Computer Vision}, pages 5111--5120, 2024.

\bibitem{gong2024reliable}
Chao Gong, Kai Chen, Zhipeng Wei, Jingjing Chen, and Yu-Gang Jiang.
\newblock Reliable and efficient concept erasure of text-to-image diffusion models.
\newblock In {\em European Conference on Computer Vision}, pages 73--88. Springer, 2024.

\bibitem{hessel2021clipscore}
Jack Hessel, Ari Holtzman, Maxwell Forbes, Ronan Le~Bras, and Yejin Choi.
\newblock Clipscore: A reference-free evaluation metric for image captioning.
\newblock In {\em Proceedings of the 2021 Conference on Empirical Methods in Natural Language Processing}, pages 7514--7528, 2021.

\bibitem{heusel2017gans}
Martin Heusel, Hubert Ramsauer, Thomas Unterthiner, Bernhard Nessler, and Sepp Hochreiter.
\newblock Gans trained by a two time-scale update rule converge to a local nash equilibrium.
\newblock {\em Advances in neural information processing systems}, 30, 2017.

\bibitem{ho2020denoising}
Jonathan Ho, Ajay Jain, and Pieter Abbeel.
\newblock Denoising diffusion probabilistic models.
\newblock {\em Advances in Neural Information Processing Systems}, 33:6840--6851, 2020.

\bibitem{ho2021classifierfree}
Jonathan Ho and Tim Salimans.
\newblock Classifier-free diffusion guidance.
\newblock In {\em NeurIPS 2021 Workshop on Deep Generative Models and Downstream Applications}, 2021.

\bibitem{hurst2024gpt}
Aaron Hurst, Adam Lerer, Adam~P Goucher, Adam Perelman, Aditya Ramesh, Aidan Clark, AJ~Ostrow, Akila Welihinda, Alan Hayes, Alec Radford, et~al.
\newblock Gpt-4o system card.
\newblock {\em arXiv preprint arXiv:2410.21276}, 2024.

\bibitem{flux2024}
Black~Forest Labs.
\newblock Flux.
\newblock \url{https://github.com/black-forest-labs/flux}, 2024.

\bibitem{liu2024latent}
Runtao Liu, Ashkan Khakzar, Jindong Gu, Qifeng Chen, Philip Torr, and Fabio Pizzati.
\newblock Latent guard: a safety framework for text-to-image generation.
\newblock In {\em European Conference on Computer Vision}, pages 93--109. Springer, 2024.

\bibitem{lu2022dpm}
Cheng Lu, Yuhao Zhou, Fan Bao, Jianfei Chen, Chongxuan Li, and Jun Zhu.
\newblock Dpm-solver: A fast ode solver for diffusion probabilistic model sampling in around 10 steps.
\newblock {\em Advances in Neural Information Processing Systems}, 35:5775--5787, 2022.

\bibitem{luo2023latent}
Simian Luo, Yiqin Tan, Longbo Huang, Jian Li, and Hang Zhao.
\newblock Latent consistency models: Synthesizing high-resolution images with few-step inference.
\newblock {\em arXiv preprint arXiv:2310.04378}, 2023.

\bibitem{lyu2024one}
Mengyao Lyu, Yuhong Yang, Haiwen Hong, Hui Chen, Xuan Jin, Yuan He, Hui Xue, Jungong Han, and Guiguang Ding.
\newblock One-dimensional adapter to rule them all: Concepts diffusion models and erasing applications.
\newblock In {\em Proceedings of the IEEE/CVF Conference on Computer Vision and Pattern Recognition}, pages 7559--7568, 2024.

\bibitem{na2025diffusion}
Byeonghu Na, Minsang Park, Gyuwon Sim, Donghyeok Shin, HeeSun Bae, Mina Kang, Se~Jung Kwon, Wanmo Kang, and Il-Chul Moon.
\newblock Diffusion adaptive text embedding for text-to-image diffusion models.
\newblock In {\em The Thirty-ninth Annual Conference on Neural Information Processing Systems}, 2025.

\bibitem{naous2024beer}
Tarek Naous, Michael~J Ryan, Alan Ritter, and Wei Xu.
\newblock Having beer after prayer? measuring cultural bias in large language models.
\newblock In {\em Proceedings of the 62nd Annual Meeting of the Association for Computational Linguistics (Volume 1: Long Papers)}, 2024.

\bibitem{parkdirect}
Yong-Hyun Park, Sangdoo Yun, Jin-Hwa Kim, Junho Kim, Geonhui Jang, Yonghyun Jeong, Junghyo Jo, and Gayoung Lee.
\newblock Direct unlearning optimization for robust and safe text-to-image models.
\newblock In {\em The Thirty-eighth Annual Conference on Neural Information Processing Systems}, 2024.

\bibitem{pham2023circumventing}
Minh Pham, Kelly~O Marshall, Niv Cohen, Govind Mittal, and Chinmay Hegde.
\newblock Circumventing concept erasure methods for text-to-image generative models.
\newblock {\em arXiv preprint arXiv:2308.01508}, 2023.

\bibitem{podell2024sdxl}
Dustin Podell, Zion English, Kyle Lacey, Andreas Blattmann, Tim Dockhorn, Jonas M{\"u}ller, Joe Penna, and Robin Rombach.
\newblock {SDXL}: Improving latent diffusion models for high-resolution image synthesis.
\newblock In {\em The Twelfth International Conference on Learning Representations}, 2024.

\bibitem{radford2021learning}
Alec Radford, Jong~Wook Kim, Chris Hallacy, Aditya Ramesh, Gabriel Goh, Sandhini Agarwal, Girish Sastry, Amanda Askell, Pamela Mishkin, Jack Clark, et~al.
\newblock Learning transferable visual models from natural language supervision.
\newblock In {\em International conference on machine learning}, pages 8748--8763. PMLR, 2021.

\bibitem{ramesh2021zero}
Aditya Ramesh, Mikhail Pavlov, Gabriel Goh, Scott Gray, Chelsea Voss, Alec Radford, Mark Chen, and Ilya Sutskever.
\newblock Zero-shot text-to-image generation.
\newblock In {\em International conference on machine learning}, pages 8821--8831. Pmlr, 2021.

\bibitem{rombach2022high}
Robin Rombach, Andreas Blattmann, Dominik Lorenz, Patrick Esser, and Bj{\"o}rn Ommer.
\newblock High-resolution image synthesis with latent diffusion models.
\newblock In {\em Proceedings of the IEEE/CVF Conference on Computer Vision and Pattern Recognition}, pages 10684--10695, 2022.

\bibitem{saharia2022photorealistic}
Chitwan Saharia, William Chan, Saurabh Saxena, Lala Li, Jay Whang, Emily~L Denton, Kamyar Ghasemipour, Raphael Gontijo~Lopes, Burcu Karagol~Ayan, Tim Salimans, Jonathan Ho, David~J Fleet, and Mohammad Norouzi.
\newblock Photorealistic text-to-image diffusion models with deep language understanding.
\newblock In S.~Koyejo, S.~Mohamed, A.~Agarwal, D.~Belgrave, K.~Cho, and A.~Oh, editors, {\em Advances in Neural Information Processing Systems}, volume~35, pages 36479--36494. Curran Associates, Inc., 2022.

\bibitem{schramowski2023safe}
Patrick Schramowski, Manuel Brack, Bj{\"o}rn Deiseroth, and Kristian Kersting.
\newblock Safe latent diffusion: Mitigating inappropriate degeneration in diffusion models.
\newblock In {\em Proceedings of the IEEE/CVF Conference on Computer Vision and Pattern Recognition}, pages 22522--22531, 2023.

\bibitem{schramowski2022can}
Patrick Schramowski, Christopher Tauchmann, and Kristian Kersting.
\newblock Can machines help us answering question 16 in datasheets, and in turn reflecting on inappropriate content?
\newblock In {\em Proceedings of the 2022 ACM Conference on Fairness, Accountability, and Transparency}, FAccT '22, page 1350–1361, New York, NY, USA, 2022. Association for Computing Machinery.

\bibitem{schuhmann2022laion}
Christoph Schuhmann, Romain Beaumont, Richard Vencu, Cade Gordon, Ross Wightman, Mehdi Cherti, Theo Coombes, Aarush Katta, Clayton Mullis, Mitchell Wortsman, et~al.
\newblock Laion-5b: An open large-scale dataset for training next generation image-text models.
\newblock {\em Advances in Neural Information Processing Systems}, 35:25278--25294, 2022.

\bibitem{song2021denoising}
Jiaming Song, Chenlin Meng, and Stefano Ermon.
\newblock Denoising diffusion implicit models.
\newblock In {\em International Conference on Learning Representations}, 2021.

\bibitem{song2023consistency}
Yang Song, Prafulla Dhariwal, Mark Chen, and Ilya Sutskever.
\newblock Consistency models.
\newblock In Andreas Krause, Emma Brunskill, Kyunghyun Cho, Barbara Engelhardt, Sivan Sabato, and Jonathan Scarlett, editors, {\em Proceedings of the 40th International Conference on Machine Learning}, volume 202 of {\em Proceedings of Machine Learning Research}, pages 32211--32252. PMLR, 23--29 Jul 2023.

\bibitem{song2019generative}
Yang Song and Stefano Ermon.
\newblock Generative modeling by estimating gradients of the data distribution.
\newblock In {\em Proceedings of the 33rd International Conference on Neural Information Processing Systems}, pages 11918--11930, 2019.

\bibitem{song2021scorebased}
Yang Song, Jascha Sohl-Dickstein, Diederik~P Kingma, Abhishek Kumar, Stefano Ermon, and Ben Poole.
\newblock Score-based generative modeling through stochastic differential equations.
\newblock In {\em International Conference on Learning Representations}, 2021.

\bibitem{tao2024cultural}
Yan Tao, Olga Viberg, Ryan~S Baker, and Ren{\'e}~F Kizilcec.
\newblock Cultural bias and cultural alignment of large language models.
\newblock {\em PNAS nexus}, 2024.

\bibitem{tsai2023ring}
Yu-Lin Tsai, Chia-Yi Hsu, Chulin Xie, Chih-Hsun Lin, Jia-You Chen, Bo~Li, Pin-Yu Chen, Chia-Mu Yu, and Chun-Ying Huang.
\newblock Ring-a-bell! how reliable are concept removal methods for diffusion models?
\newblock {\em arXiv preprint arXiv:2310.10012}, 2023.

\bibitem{xie2025sana}
Enze Xie, Junsong Chen, Junyu Chen, Han Cai, Haotian Tang, Yujun Lin, Zhekai Zhang, Muyang Li, Ligeng Zhu, Yao Lu, and Song Han.
\newblock {SANA}: Efficient high-resolution text-to-image synthesis with linear diffusion transformers.
\newblock In {\em The Thirteenth International Conference on Learning Representations}, 2025.

\bibitem{yang2024guardti}
Yijun Yang, Ruiyuan Gao, Xiao Yang, Jianyuan Zhong, and Qiang Xu.
\newblock Guardt2i: Defending text-to-image models from adversarial prompts.
\newblock In {\em The Thirty-eighth Annual Conference on Neural Information Processing Systems}, 2024.

\bibitem{yang2024sneakyprompt}
Yuchen Yang, Bo~Hui, Haolin Yuan, Neil Gong, and Yinzhi Cao.
\newblock Sneakyprompt: Jailbreaking text-to-image generative models.
\newblock In {\em 2024 IEEE symposium on security and privacy (SP)}, pages 897--912. IEEE, 2024.

\bibitem{yoon2024safree}
Jaehong Yoon, Shoubin Yu, Vaidehi Patil, Huaxiu Yao, and Mohit Bansal.
\newblock Safree: Training-free and adaptive guard for safe text-to-image and video generation.
\newblock {\em ICLR}, 2025.

\bibitem{yoon2018gain}
Jinsung Yoon, James Jordon, and Mihaela Schaar.
\newblock Gain: Missing data imputation using generative adversarial nets.
\newblock In {\em International conference on machine learning}, pages 5689--5698. PMLR, 2018.

\bibitem{zhang2024forget}
Gong Zhang, Kai Wang, Xingqian Xu, Zhangyang Wang, and Humphrey Shi.
\newblock Forget-me-not: Learning to forget in text-to-image diffusion models.
\newblock In {\em Proceedings of the IEEE/CVF conference on computer vision and pattern recognition}, pages 1755--1764, 2024.

\bibitem{zhang2018unreasonable}
Richard Zhang, Phillip Isola, Alexei~A Efros, Eli Shechtman, and Oliver Wang.
\newblock The unreasonable effectiveness of deep features as a perceptual metric.
\newblock In {\em Proceedings of the IEEE conference on computer vision and pattern recognition}, pages 586--595, 2018.

\end{thebibliography}
\bibliographystyle{plain}

\newpage
\section*{NeurIPS Paper Checklist}

\begin{enumerate}

\item {\bf Claims}
    \item[] Question: Do the main claims made in the abstract and introduction accurately reflect the paper's contributions and scope?
    \item[] Answer: \answerYes{} 
    \item[] Justification: The abstract and introduction accurately reflect the paper’s contributions.
    \item[] Guidelines:
    \begin{itemize}
        \item The answer NA means that the abstract and introduction do not include the claims made in the paper.
        \item The abstract and/or introduction should clearly state the claims made, including the contributions made in the paper and important assumptions and limitations. A No or NA answer to this question will not be perceived well by the reviewers. 
        \item The claims made should match theoretical and experimental results, and reflect how much the results can be expected to generalize to other settings. 
        \item It is fine to include aspirational goals as motivation as long as it is clear that these goals are not attained by the paper. 
    \end{itemize}

\item {\bf Limitations}
    \item[] Question: Does the paper discuss the limitations of the work performed by the authors?
    \item[] Answer: \answerYes{} 
    \item[] Justification: Limitations such as potential computational overhead have been discussed in \cref{subsec:abl} and \cref{app_sec:limit}.
    \item[] Guidelines:
    \begin{itemize}
        \item The answer NA means that the paper has no limitation while the answer No means that the paper has limitations, but those are not discussed in the paper. 
        \item The authors are encouraged to create a separate "Limitations" section in their paper.
        \item The paper should point out any strong assumptions and how robust the results are to violations of these assumptions (e.g., independence assumptions, noiseless settings, model well-specification, asymptotic approximations only holding locally). The authors should reflect on how these assumptions might be violated in practice and what the implications would be.
        \item The authors should reflect on the scope of the claims made, e.g., if the approach was only tested on a few datasets or with a few runs. In general, empirical results often depend on implicit assumptions, which should be articulated.
        \item The authors should reflect on the factors that influence the performance of the approach. For example, a facial recognition algorithm may perform poorly when image resolution is low or images are taken in low lighting. Or a speech-to-text system might not be used reliably to provide closed captions for online lectures because it fails to handle technical jargon.
        \item The authors should discuss the computational efficiency of the proposed algorithms and how they scale with dataset size.
        \item If applicable, the authors should discuss possible limitations of their approach to address problems of privacy and fairness.
        \item While the authors might fear that complete honesty about limitations might be used by reviewers as grounds for rejection, a worse outcome might be that reviewers discover limitations that aren't acknowledged in the paper. The authors should use their best judgment and recognize that individual actions in favor of transparency play an important role in developing norms that preserve the integrity of the community. Reviewers will be specifically instructed to not penalize honesty concerning limitations.
    \end{itemize}

\item {\bf Theory assumptions and proofs}
    \item[] Question: For each theoretical result, does the paper provide the full set of assumptions and a complete (and correct) proof?
    \item[] Answer: \answerYes{} 
    \item[] Justification: The paper includes explicit assumptions and proofs of theoretical results in \cref{sec:method} and supplemented in \cref{app_sec:proof}.
    \item[] Guidelines:
    \begin{itemize}
        \item The answer NA means that the paper does not include theoretical results. 
        \item All the theorems, formulas, and proofs in the paper should be numbered and cross-referenced.
        \item All assumptions should be clearly stated or referenced in the statement of any theorems.
        \item The proofs can either appear in the main paper or the supplemental material, but if they appear in the supplemental material, the authors are encouraged to provide a short proof sketch to provide intuition. 
        \item Inversely, any informal proof provided in the core of the paper should be complemented by formal proofs provided in appendix or supplemental material.
        \item Theorems and Lemmas that the proof relies upon should be properly referenced. 
    \end{itemize}

    \item {\bf Experimental result reproducibility}
    \item[] Question: Does the paper fully disclose all the information needed to reproduce the main experimental results of the paper to the extent that it affects the main claims and/or conclusions of the paper (regardless of whether the code and data are provided or not)?
    \item[] Answer: \answerYes{}{} 
    \item[] Justification: The experimental settings, baselines, metrics, and evaluation protocols are thoroughly detailed in \cref{subsec:exp_setting} and \cref{app_sec:add_exp_setting}.
    \item[] Guidelines:
    \begin{itemize}
        \item The answer NA means that the paper does not include experiments.
        \item If the paper includes experiments, a No answer to this question will not be perceived well by the reviewers: Making the paper reproducible is important, regardless of whether the code and data are provided or not.
        \item If the contribution is a dataset and/or model, the authors should describe the steps taken to make their results reproducible or verifiable. 
        \item Depending on the contribution, reproducibility can be accomplished in various ways. For example, if the contribution is a novel architecture, describing the architecture fully might suffice, or if the contribution is a specific model and empirical evaluation, it may be necessary to either make it possible for others to replicate the model with the same dataset, or provide access to the model. In general. releasing code and data is often one good way to accomplish this, but reproducibility can also be provided via detailed instructions for how to replicate the results, access to a hosted model (e.g., in the case of a large language model), releasing of a model checkpoint, or other means that are appropriate to the research performed.
        \item While NeurIPS does not require releasing code, the conference does require all submissions to provide some reasonable avenue for reproducibility, which may depend on the nature of the contribution. For example
        \begin{enumerate}
            \item If the contribution is primarily a new algorithm, the paper should make it clear how to reproduce that algorithm.
            \item If the contribution is primarily a new model architecture, the paper should describe the architecture clearly and fully.
            \item If the contribution is a new model (e.g., a large language model), then there should either be a way to access this model for reproducing the results or a way to reproduce the model (e.g., with an open-source dataset or instructions for how to construct the dataset).
            \item We recognize that reproducibility may be tricky in some cases, in which case authors are welcome to describe the particular way they provide for reproducibility. In the case of closed-source models, it may be that access to the model is limited in some way (e.g., to registered users), but it should be possible for other researchers to have some path to reproducing or verifying the results.
        \end{enumerate}
    \end{itemize}

\item {\bf Open access to data and code}
    \item[] Question: Does the paper provide open access to the data and code, with sufficient instructions to faithfully reproduce the main experimental results, as described in supplemental material?
    \item[] Answer: \answerYes{} 
    \item[] Justification: The code and the corresponding instruction are publicly available at \url{https://github.com/aailab-kaist/STG}.
    \item[] Guidelines:
    \begin{itemize}
        \item The answer NA means that paper does not include experiments requiring code.
        \item Please see the NeurIPS code and data submission guidelines (\url{https://nips.cc/public/guides/CodeSubmissionPolicy}) for more details.
        \item While we encourage the release of code and data, we understand that this might not be possible, so “No” is an acceptable answer. Papers cannot be rejected simply for not including code, unless this is central to the contribution (e.g., for a new open-source benchmark).
        \item The instructions should contain the exact command and environment needed to run to reproduce the results. See the NeurIPS code and data submission guidelines (\url{https://nips.cc/public/guides/CodeSubmissionPolicy}) for more details.
        \item The authors should provide instructions on data access and preparation, including how to access the raw data, preprocessed data, intermediate data, and generated data, etc.
        \item The authors should provide scripts to reproduce all experimental results for the new proposed method and baselines. If only a subset of experiments are reproducible, they should state which ones are omitted from the script and why.
        \item At submission time, to preserve anonymity, the authors should release anonymized versions (if applicable).
        \item Providing as much information as possible in supplemental material (appended to the paper) is recommended, but including URLs to data and code is permitted.
    \end{itemize}

\item {\bf Experimental setting/details}
    \item[] Question: Does the paper specify all the training and test details (e.g., data splits, hyperparameters, how they were chosen, type of optimizer, etc.) necessary to understand the results?
    \item[] Answer: \answerYes{} 
    \item[] Justification: All experimental setups, hyperparameters, datasets, and evaluation criteria are comprehensively detailed in \cref{subsec:exp_setting} and \cref{app_sec:add_exp_setting}.
    \item[] Guidelines:
    \begin{itemize}
        \item The answer NA means that the paper does not include experiments.
        \item The experimental setting should be presented in the core of the paper to a level of detail that is necessary to appreciate the results and make sense of them.
        \item The full details can be provided either with the code, in appendix, or as supplemental material.
    \end{itemize}

\item {\bf Experiment statistical significance}
    \item[] Question: Does the paper report error bars suitably and correctly defined or other appropriate information about the statistical significance of the experiments?
    \item[] Answer: \answerYes{} 
    \item[] Justification: Confidence intervals of the experimental results for the main claim are provided in \cref{fig:nudity}.
    \item[] Guidelines:
    \begin{itemize}
        \item The answer NA means that the paper does not include experiments.
        \item The authors should answer "Yes" if the results are accompanied by error bars, confidence intervals, or statistical significance tests, at least for the experiments that support the main claims of the paper.
        \item The factors of variability that the error bars are capturing should be clearly stated (for example, train/test split, initialization, random drawing of some parameter, or overall run with given experimental conditions).
        \item The method for calculating the error bars should be explained (closed form formula, call to a library function, bootstrap, etc.)
        \item The assumptions made should be given (e.g., Normally distributed errors).
        \item It should be clear whether the error bar is the standard deviation or the standard error of the mean.
        \item It is OK to report 1-sigma error bars, but one should state it. The authors should preferably report a 2-sigma error bar than state that they have a 96\% CI, if the hypothesis of Normality of errors is not verified.
        \item For asymmetric distributions, the authors should be careful not to show in tables or figures symmetric error bars that would yield results that are out of range (e.g. negative error rates).
        \item If error bars are reported in tables or plots, The authors should explain in the text how they were calculated and reference the corresponding figures or tables in the text.
    \end{itemize}

\item {\bf Experiments compute resources}
    \item[] Question: For each experiment, does the paper provide sufficient information on the computer resources (type of compute workers, memory, time of execution) needed to reproduce the experiments?
    \item[] Answer: \answerYes{} 
    \item[] Justification: Details about computational resources such as execution time and GPU usage are provided in \cref{subsec:abl} and \cref{app_sec:add_exp_setting}.
    \item[] Guidelines:
    \begin{itemize}
        \item The answer NA means that the paper does not include experiments.
        \item The paper should indicate the type of compute workers CPU or GPU, internal cluster, or cloud provider, including relevant memory and storage.
        \item The paper should provide the amount of compute required for each of the individual experimental runs as well as estimate the total compute. 
        \item The paper should disclose whether the full research project required more compute than the experiments reported in the paper (e.g., preliminary or failed experiments that didn't make it into the paper). 
    \end{itemize}
    
\item {\bf Code of ethics}
    \item[] Question: Does the research conducted in the paper conform, in every respect, with the NeurIPS Code of Ethics \url{https://neurips.cc/public/EthicsGuidelines}?
    \item[] Answer: \answerYes{} 
    \item[] Justification: Research conducted adheres to ethical standards and responsible AI practices as per NeurIPS guidelines.
    \item[] Guidelines:
    \begin{itemize}
        \item The answer NA means that the authors have not reviewed the NeurIPS Code of Ethics.
        \item If the authors answer No, they should explain the special circumstances that require a deviation from the Code of Ethics.
        \item The authors should make sure to preserve anonymity (e.g., if there is a special consideration due to laws or regulations in their jurisdiction).
    \end{itemize}

\item {\bf Broader impacts}
    \item[] Question: Does the paper discuss both potential positive societal impacts and negative societal impacts of the work performed?
    \item[] Answer: \answerYes{} 
    \item[] Justification: Positive societal impacts, such as ethical AI use, and negative impacts, such as misuse potential, are discussed in \cref{sec:intro} and \cref{app_sec:limit}.
    \item[] Guidelines:
    \begin{itemize}
        \item The answer NA means that there is no societal impact of the work performed.
        \item If the authors answer NA or No, they should explain why their work has no societal impact or why the paper does not address societal impact.
        \item Examples of negative societal impacts include potential malicious or unintended uses (e.g., disinformation, generating fake profiles, surveillance), fairness considerations (e.g., deployment of technologies that could make decisions that unfairly impact specific groups), privacy considerations, and security considerations.
        \item The conference expects that many papers will be foundational research and not tied to particular applications, let alone deployments. However, if there is a direct path to any negative applications, the authors should point it out. For example, it is legitimate to point out that an improvement in the quality of generative models could be used to generate deepfakes for disinformation. On the other hand, it is not needed to point out that a generic algorithm for optimizing neural networks could enable people to train models that generate Deepfakes faster.
        \item The authors should consider possible harms that could arise when the technology is being used as intended and functioning correctly, harms that could arise when the technology is being used as intended but gives incorrect results, and harms following from (intentional or unintentional) misuse of the technology.
        \item If there are negative societal impacts, the authors could also discuss possible mitigation strategies (e.g., gated release of models, providing defenses in addition to attacks, mechanisms for monitoring misuse, mechanisms to monitor how a system learns from feedback over time, improving the efficiency and accessibility of ML).
    \end{itemize}
    
\item {\bf Safeguards}
    \item[] Question: Does the paper describe safeguards that have been put in place for responsible release of data or models that have a high risk for misuse (e.g., pretrained language models, image generators, or scraped datasets)?
    \item[] Answer: \answerYes{} 
    \item[] Justification: We will provide data that has a high risk of being misused with appropriate safeguards.
    \item[] Guidelines:
    \begin{itemize}
        \item The answer NA means that the paper poses no such risks.
        \item Released models that have a high risk for misuse or dual-use should be released with necessary safeguards to allow for controlled use of the model, for example by requiring that users adhere to usage guidelines or restrictions to access the model or implementing safety filters. 
        \item Datasets that have been scraped from the Internet could pose safety risks. The authors should describe how they avoided releasing unsafe images.
        \item We recognize that providing effective safeguards is challenging, and many papers do not require this, but we encourage authors to take this into account and make a best faith effort.
    \end{itemize}

\item {\bf Licenses for existing assets}
    \item[] Question: Are the creators or original owners of assets (e.g., code, data, models), used in the paper, properly credited and are the license and terms of use explicitly mentioned and properly respected?
    \item[] Answer: \answerYes{} 
    \item[] Justification: We cite the code and datasets in \cref{subsec:exp_setting} and \cref{app_sec:add_exp_setting,app_sec:license}.
    \item[] Guidelines:
    \begin{itemize}
        \item The answer NA means that the paper does not use existing assets.
        \item The authors should cite the original paper that produced the code package or dataset.
        \item The authors should state which version of the asset is used and, if possible, include a URL.
        \item The name of the license (e.g., CC-BY 4.0) should be included for each asset.
        \item For scraped data from a particular source (e.g., website), the copyright and terms of service of that source should be provided.
        \item If assets are released, the license, copyright information, and terms of use in the package should be provided. For popular datasets, \url{paperswithcode.com/datasets} has curated licenses for some datasets. Their licensing guide can help determine the license of a dataset.
        \item For existing datasets that are re-packaged, both the original license and the license of the derived asset (if it has changed) should be provided.
        \item If this information is not available online, the authors are encouraged to reach out to the asset's creators.
    \end{itemize}

\item {\bf New assets}
    \item[] Question: Are new assets introduced in the paper well documented and is the documentation provided alongside the assets?
    \item[] Answer: \answerYes{} 
    \item[] Justification: The code is publicly available at \url{https://github.com/aailab-kaist/STG}.
    \item[] Guidelines:
    \begin{itemize}
        \item The answer NA means that the paper does not release new assets.
        \item Researchers should communicate the details of the dataset/code/model as part of their submissions via structured templates. This includes details about training, license, limitations, etc. 
        \item The paper should discuss whether and how consent was obtained from people whose asset is used.
        \item At submission time, remember to anonymize your assets (if applicable). You can either create an anonymized URL or include an anonymized zip file.
    \end{itemize}

\item {\bf Crowdsourcing and research with human subjects}
    \item[] Question: For crowdsourcing experiments and research with human subjects, does the paper include the full text of instructions given to participants and screenshots, if applicable, as well as details about compensation (if any)? 
    \item[] Answer: \answerNA{} 
    \item[] Justification: The paper does not involve crowdsourcing nor research with human subjects.
    \item[] Guidelines:
    \begin{itemize}
        \item The answer NA means that the paper does not involve crowdsourcing nor research with human subjects.
        \item Including this information in the supplemental material is fine, but if the main contribution of the paper involves human subjects, then as much detail as possible should be included in the main paper. 
        \item According to the NeurIPS Code of Ethics, workers involved in data collection, curation, or other labor should be paid at least the minimum wage in the country of the data collector. 
    \end{itemize}

\item {\bf Institutional review board (IRB) approvals or equivalent for research with human subjects}
    \item[] Question: Does the paper describe potential risks incurred by study participants, whether such risks were disclosed to the subjects, and whether Institutional Review Board (IRB) approvals (or an equivalent approval/review based on the requirements of your country or institution) were obtained?
    \item[] Answer: \answerNA{} 
    \item[] Justification: The paper does not involve crowdsourcing nor research with human subjects.
    \item[] Guidelines:
    \begin{itemize}
        \item The answer NA means that the paper does not involve crowdsourcing nor research with human subjects.
        \item Depending on the country in which research is conducted, IRB approval (or equivalent) may be required for any human subjects research. If you obtained IRB approval, you should clearly state this in the paper. 
        \item We recognize that the procedures for this may vary significantly between institutions and locations, and we expect authors to adhere to the NeurIPS Code of Ethics and the guidelines for their institution. 
        \item For initial submissions, do not include any information that would break anonymity (if applicable), such as the institution conducting the review.
    \end{itemize}

\item {\bf Declaration of LLM usage}
    \item[] Question: Does the paper describe the usage of LLMs if it is an important, original, or non-standard component of the core methods in this research? Note that if the LLM is used only for writing, editing, or formatting purposes and does not impact the core methodology, scientific rigorousness, or originality of the research, declaration is not required.
    \item[] Answer: \answerNA{} 
    \item[] Justification: The core method development in this research does not involve LLMs as any important, original, or non-standard components. Following the previous work, we use LLMs only to evaluate experimental results, as mentioned in \cref{sec:exp} and \cref{app_sec:add_exp_setting}.
    \item[] Guidelines:
    \begin{itemize}
        \item The answer NA means that the core method development in this research does not involve LLMs as any important, original, or non-standard components.
        \item Please refer to our LLM policy (\url{https://neurips.cc/Conferences/2025/LLM}) for what should or should not be described.
    \end{itemize}

\end{enumerate}


\newpage
\appendix

\section{Proof and derivation}
\label{app_sec:proof}

\subsection{\texorpdfstring{Detailed derivation of \cref{eq:sg_4}}{Detailed derivation of Eq. (9)}}
\label{app_subsec:derivation}

We provide the derivation of Safe Data Guidance (SDG), as discussed in \cref{subsec:safe_data_guidance}:
\begin{align}
    \nabla_{\rvx_t} \log q_t(o=1|\rvx_t,\rvc) \approx \nabla_{\rvx_t} \log g \Big (\frac{1}{\sqrt{\bar{\alpha}_t}} \big (\rvx_t + (1-\bar{\alpha}_t) \rvs_{\bm{\theta}} (\rvx_t, \rvc, t) \big ) \Big ). \label{eq:app_sg}
\end{align}
The derivation relies on two assumptions: 1) the safety function $g(\rvx_0)$ is proportional to the safe probability distribution $q(o=1|\rvx_0)$, and 2) the safety indicator $o$ is conditionally independent of $\rvx_t$ and $\rvc$ given $\rvx_0$. Regarding the first assumption, since the safety function is designed to express the safety of a given image, it is generally reasonable to expect it to resemble the safe probability distribution. Nevertheless, potential issues arising from the deviation between them are discussed in \cref{subsec:toy}. The second assumption is also plausible in our setting, as the safety indicator ultimately reflects the safety of the final image $\rvx_0$. Once $\rvx_0$ is given, it is natural to assume that $o$ is independent of the intermediate state $\rvx_t$ and the condition $\rvc$.

Now, we provide a detailed derivation of SDG based on the assumptions discussed above.
\begin{align}
    \nabla_{\rvx_t} \log q_t(o=1|\rvx_t,\rvc) & = \nabla_{\rvx_t} \log \int q(o=1, \rvx_0|\rvx_t,\rvc) d\rvx_0 \label{eq:app_sg_0_1}  \\
    & = \nabla_{\rvx_t} \log \int q(o=1| \rvx_0,\rvx_t,\rvc)q_t(\rvx_0|\rvx_t,\rvc) d\rvx_0 \label{eq:app_sg_0_2} \\
    & = \nabla_{\rvx_t} \log \int q(o=1| \rvx_0)q_t(\rvx_0|\rvx_t,\rvc) d\rvx_0 \label{eq:app_sg_0_3} \\
    &  = \nabla_{\rvx_t} \log \mathbb{E}_{\rvx_0 \sim q(\rvx_0 | \rvx_t,\rvc)} [ q(o=1| \rvx_0) ] \label{eq:app_sg_1} \\
    & = \nabla_{\rvx_t}\log \mathbb{E}_{\rvx_0 \sim q(\rvx_0 | \rvx_t,\rvc)} [ g(\rvx_0) ]  \label{eq:app_sg_2} \\
    & \approx \nabla_{\rvx_t} \log g ( \mathbb{E}_{\rvx_0 \sim q(\rvx_0 | \rvx_t,\rvc)} [\rvx_0]) \label{eq:app_sg_3} \\
    & = \nabla_{\rvx_t} \log g \Big (\frac{1}{\sqrt{\bar{\alpha}_t}} \big (\rvx_t + (1-\bar{\alpha}_t)\nabla_{\rvx_t} \log q_t(\rvx_t|\rvc) \big ) \Big )  \label{eq:app_sg_4_0} \\
    & \approx \nabla_{\rvx_t} \log g \Big (\frac{1}{\sqrt{\bar{\alpha}_t}} \big (\rvx_t + (1-\bar{\alpha}_t) \rvs_{\bm{\theta}} (\rvx_t, \rvc, t) \big ) \Big )  \label{eq:app_sg_4}
\end{align}

\cref{eq:app_sg_0_1} marginalizes over the final image $\rvx_0$, and \cref{eq:app_sg_0_2} applies the chain rule of probability. \cref{eq:app_sg_0_3} applies the second assumption that the safe indicator $o$ is conditionally independent of $\rvx_t$ and $\rvc$ given $\rvx_0$. \cref{eq:app_sg_1} rewrites the integral form as an expectation over the conditional distribution. \cref{eq:app_sg_2} uses the assumption that the safe probability is proportional to the safety function $g$. While the proportionality implies a normalizing constant, this constant vanishes under the logarithmic and gradient operations. \cref{eq:app_sg_3} follows from the first-order Taylor approximation, treating the expectation of the function as approximately equal to the function of the expectation. We provide the analysis of this approximation in the below. \cref{eq:app_sg_4_0} applies Tweedie's formula~\cite{efron2011tweedie} to estimate the posterior expectation of $\rvx_0$, and \cref{eq:app_sg_4} further approximates the conditional score function using the learned score network $\rvs_{\bm{\theta}}$.

It is worth noting that the approximation applied in \cref{eq:tdsf_approx} for the derivation of Safe Text Embedding Guidance (STG) in \cref{subsec:safe_text_embedding} follows the same underlying logic as the derivation steps presented in \cref{eq:app_sg_2,eq:app_sg_3,eq:app_sg_4_0,eq:app_sg_4}:
\begin{align}
    g_t(\rvx_t, \rvc) & := \mathbb{E}_{\rvx_0 \sim q(\rvx_0 | \rvx_t, \rvc)} [ g(\rvx_0) ] \\
    & \approx g (\mathbb{E}_{\rvx_0 \sim q(\rvx_0 | \rvx_t, \rvc)} [ \rvx_0 ] ) \\
    & =  g \Big (\frac{1}{\sqrt{\bar{\alpha}_t}} \big (\rvx_t + (1-\bar{\alpha}_t) \nabla_{\rvx_t} \log q_t(\rvx_t|\rvc) \big ) \Big ) \\
    & \approx  g \Big (\frac{1}{\sqrt{\bar{\alpha}_t}} \big (\rvx_t + (1-\bar{\alpha}_t) \rvs_{\bm{\theta}} (\rvx_t, \rvc, t) \big ) \Big ). \label{eq:app_tdsf_approx}
\end{align}

\paragraph{Analysis of Taylor approximation}

We analyze the approximation error of \cref{eq:app_sg_3}, following the theoretical analyses in prior works~\cite{chung2024prompt,na2025diffusion}. For a Lipschitz continuous safety function $g$ with Lipschitz constant $L$, the approximation error can be derived as:
\begin{align}
    & | \mathbb{E}_{\rvx_0 \sim q(\rvx_0 | \rvx_t,\rvc)} [ g(\rvx_0) ] - g ( \mathbb{E}_{\rvx_0 \sim q(\rvx_0 | \rvx_t,\rvc)} [\rvx_0]) | \\
    & \leq \int | g(\rvx_0) -  g ( \mathbb{E}_{\rvx_0 \sim q(\rvx_0 | \rvx_t,\rvc)} [\rvx_0])| q(\rvx_0 | \rvx_t,\rvc) d\rvx_0  \\
    & \leq \int L | \rvx_0 -  \mathbb{E}_{\rvx_0 \sim q(\rvx_0 | \rvx_t,\rvc)} [\rvx_0]| q(\rvx_0 | \rvx_t,\rvc) d\rvx_0  \\
    & = L \cdot m_1(\rvx_t,\rvc,t),
\end{align}
where $m_1(\rvx_t,\rvc,t) := \int | \rvx_0 -  \mathbb{E}_{\rvx_0 \sim q(\rvx_0 | \rvx_t,\rvc)} [\rvx_0]| q(\rvx_0 | \rvx_t,\rvc) d\rvx_0 $ denotes the mean deviation of the conditional distribution $q(\rvx_0 | \rvx_t,\rvc)$, quantifying how far the samples $\rvx_0$ deviate from their conditional expectation. The Lipschitz constant $L$ represents the smoothness of the safety function $g$, which is typically implemented as a neural network and therefore has a finite value; smoother networks yield tighter approximation bounds. Furthermore, as $t$ decreases, the samples approach the clean data space, reducing $m_1(\rvx_t,\rvc,t)$ and consequently lowering the approximation error.

\subsection{\texorpdfstring{Proof of \cref{thm:main}}{Proof of Theorem 1}}
\label{app_subsec:thm_main}

\thmmain*
\begin{proof}
    Using a first-order Taylor expansion, we derive the following derivation:
    \begin{align}
        \log q_t(\rvx_t | \rvc + \rho \nabla_\rvc g_t(\rvx_t, \rvc)) = \log q_t(\rvx_t|\rvc) + \rho \nabla_\rvc g_t(\rvx_t, \rvc)^T \nabla_\rvc \log q_t(\rvx_t|\rvc) + O(\rho^2). \label{eq:stg_analysis1}
    \end{align}

    Applying the gradient operator with respect to $\rvx_t$ to both sides, we obtain the following result:
    \begin{align}
        \nabla_{\rvx_t} \log q_t(\rvx_t | \rvc + & \rho \nabla_\rvc g_t(\rvx_t, \rvc)) \nonumber \\
        & = \nabla_{\rvx_t} \log q_t (\rvx_t | \rvc) +  \nabla_{\rvx_t} \{ \rho \nabla_\rvc g_t(\rvx_t, \rvc)^T \nabla_\rvc \log q_t(\rvx_t|\rvc)  \} + O(\rho^2). \label{eq:stg_analysis2}
    \end{align}
\end{proof}

\section{Additional experimental settings}
\label{app_sec:add_exp_setting}

\subsection{Experimental setup}
\label{app_subsec:exp_setup}

\textbf{Backbone and samplers}~
Following the previous work~\cite{parkdirect,yoon2018gain}, we use Stable Diffusion v1.4~\cite{rombach2022high} with a CLIP VIT-L/14 text encoder~\cite{radford2021learning} at a 512$\times$512 resolution as the backbone architecture for most of our experiments. The model card and weights are obtained from Hugging Face.\footnote{\url{https://huggingface.co/CompVis/stable-diffusion-v1-4}} We fix the sampling process using a DDIM sampler~\cite{song2021denoising} with 50 sampling steps and a classifier-free guidance scale of $7.5$. When using the DDPM sampler~\cite{ho2020denoising}, we keep all other settings identical to those of the DDIM sampler.

To further evaluate the generalization ability of STG, we conduct additional experiments using different backbones and samplers. For each backbone, we follow the default sampler and configuration settings provided in the \textit{diffusers} library. 
For PixArt-$\alpha$~\cite{chen2024pixartalpha}, we use a Transformer-based architecture with Flan-T5-XXL~\cite{chung2024scaling} as the text encoder. Sampling follows the default configuration for this model: a DPM-Solver~\cite{lu2022dpm} with 20 steps and a classifier-free guidance scale of $4.5$. The results of these experiments are presented in \cref{app_subsec:pixart}.
For FLUX, SDXL, and SD3, which employ multiple text encoders, we also follow their respective default configurations. FLUX~\cite{flux2024} uses a rectified flow transformer with CLIP-L/14 and T5-XXL as text encoders, a flow-matching Euler sampler with 28 steps, and a guidance scale of 3.5. SDXL~\cite{podell2024sdxl} uses CLIP-L/14 and CLIP-bigG/14 text encoders with a DDIM sampler, 50 steps, and a guidance scale of 5.0. SD3~\cite{esser2024scaling} employs CLIP-L/14, CLIP-bigG/14, and T5-XXL as text encoders, with a flow-matching Euler sampler (28 steps) and a guidance scale of 7.0.
LCM~\cite{luo2023latent} serves as a fast generation method, utilizing a single CLIP-L/14 text encoder and the consistency model~\cite{song2023consistency} framework for efficient few-step sampling. We adopt its default configuration with 4 inference steps and a classifier-free guidance scale of 8.5. The experimental results of FLUX, SDXL, SD3, and LCM are summarized in \cref{tab:gen}.

\textbf{Nudity and violence}~~
Following the previous work~\cite{parkdirect}, we evaluate our method on \textit{nudity} and \textit{violence} using black-box and white-box red-teaming protocols. For black-box attacks, we use Ring-A-Bell~\cite{tsai2023ring} and SneakyPrompt~\cite{yang2024sneakyprompt}. Specifically, we use the 95 nudity prompts and 250 violence prompts provided by the authors for Ring-A-Bell,\footnote{\url{https://github.com/chiayi-hsu/Ring-A-Bell}} and 200 nudity-related prompts for SneakyPrompt.\footnote{\url{https://github.com/Yuchen413/text2image_safety}}

For white-box attacks on the violence task, we adopt Concept Inversion~\cite{pham2023circumventing}, where a special token <c> is learned via textual inversion to bypass safe models.
Following the DUO protocol~\cite{parkdirect}, we use 304 prompts with a Q16 percentage of 0.95 or higher from the I2P benchmark~\cite{schramowski2023safe},\footnote{\url{https://github.com/ml-research/i2p}} in order to generate harmful images.

\textbf{Artist-style removal}~~
Following the previous work~\cite{gandikota2023erasing,yoon2024safree}, we also evaluate safety methods on an \textit{artist-style removal} task. We use two datasets, each consisting of 100 prompts (20 prompts per artist across five artists). The first dataset contains famous artists (\textit{Van Gogh, Picasso, Rembrandt, Warhol, Caravaggio}), and the second includes modern artists (\textit{McKernan, Kinkade, Edlin, Eng, Ajin: Demi-Human}), all of whom are known to be mimicked by Stable Diffusion. We consider the removal of one artist's style as the \textit{safe} objective. We evaluate how well the style of the target artist is suppressed when prompted explicitly, while ensuring that the styles of the remaining artists are preserved when they are not the removal target.

\begin{algorithm}[t]
    \caption{Diffusion Sampling with STG}
    \label{alg:text_update}
    \begin{algorithmic}[1]
        \STATE $\rvx_T \sim p_T(\cdot)$ \hfill \texttt{// Sample from prior distribution}
        \STATE $\rvc \leftarrow \mathbf{I}_{\bm{\phi}}(y)$ \hfill \texttt{// Initial text embedding from text encoder $\mathbf{I}_{\bm{\phi}}$}
        \FOR{$t=T$ {\bfseries to} $1$}
            \STATE \textbf{if} $t \in [ (1-\gamma) T, \gamma T ]$ \textbf{then} \hfill \texttt{// Update only middle steps, controlled by $\gamma$}
                \STATE \quad $g \leftarrow g_t(\rvx_t, \rvc)$
                \STATE \quad \textbf{if} $-g \geq \tau$ \textbf{then} \hfill \texttt{// Update when unsafe score $-g$ exceeds threshold $\tau$}
                    \STATE \quad \quad $\rvc \leftarrow \rvc + \rho {\nabla_{\bm{c}} g}  $ \hfill \texttt{// Text embedding update with update scale $\rho$} 
            \STATE \quad \textbf{end if}
            \STATE \textbf{end if}
            \STATE $\rvx_{t-1} \leftarrow \rvx_t + \tfrac{1}{2} \beta_t (\rvx_t + \rvs_{\bm{\theta}}(\rvx_t, \rvc, t))$  \hfill \texttt{// Denoising step} 
        \ENDFOR
        \RETURN{$\rvx_0$}
    \end{algorithmic}
\end{algorithm}

\subsection{Implementation details for STG}
\label{app_subsec:exp_detail_stg}

Our implementation is based on the Stable Diffusion pipeline built on top of the DUO codebase,\footnote{\label{footnote:duo}\url{https://github.com/naver-ai/DUO}} which uses Diffusers.\footnote{\url{https://github.com/huggingface/diffusers}} We reproduce all baselines and implement our model within this framework. Most experiments are conducted on a single NVIDIA A100 GPU with CUDA 11.4. For PixArt-$\alpha$, FLUX, SDXL, SD3, and LCM, the implementations are based on the \textit{PixArtAlphaPipeline},\footnote{\url{https://huggingface.co/docs/diffusers/main/en/api/pipelines/pixart}} \textit{FluxPipeline},\footnote{\url{https://huggingface.co/docs/diffusers/main/en/api/pipelines/flux}} \textit{StableDiffusionXLPipeline},\footnote{\url{https://huggingface.co/docs/diffusers/main/en/api/pipelines/stable_diffusion/stable_diffusion_xl}} \textit{StableDiffusion3Pipeline},\footnote{\url{https://huggingface.co/docs/diffusers/main/en/api/pipelines/stable_diffusion/stable_diffusion_3}} and \textit{LatentConsistencyModelPipeline}\footnote{\url{https://huggingface.co/docs/diffusers/main/en/api/pipelines/latent_consistency_models}} respectively, as provided by the Diffusers library.

We determine the safety function $g$ for STG as follows. For nudity, $g$ is set as the negative sum of the confidence scores of bounding boxes labeled as nudity by the NudeNet detector~\cite{bedapudi2019nudenet}.\footnote{\url{https://github.com/notAI-tech/NudeNet}} For violence, $g$ is defined as the negative CLIP score~\cite{hessel2021clipscore} between the generated image and a pre-defined violence-related text prompt: `\textit{bleeding, suffering, with a gun, horror}'. Note that this text prompt is constructed by aggregating representative keywords from the DUO protocol~\cite{parkdirect} used to generate unsafe images. For artist-style removal, $g$ is computed as the difference between the CLIP score of the image with the text `\textit{art}' and the CLIP score with the target artist's name. In the famous artist set, the target artist is `\textit{Van Gogh}', and in the modern artist set, it is `\textit{Kelly McKernan}'. For the CLIP score, we use CLIP ViT-L/14 text encoder.

\begin{figure}[tp]
    \centering
    \includegraphics[width=0.7\linewidth]{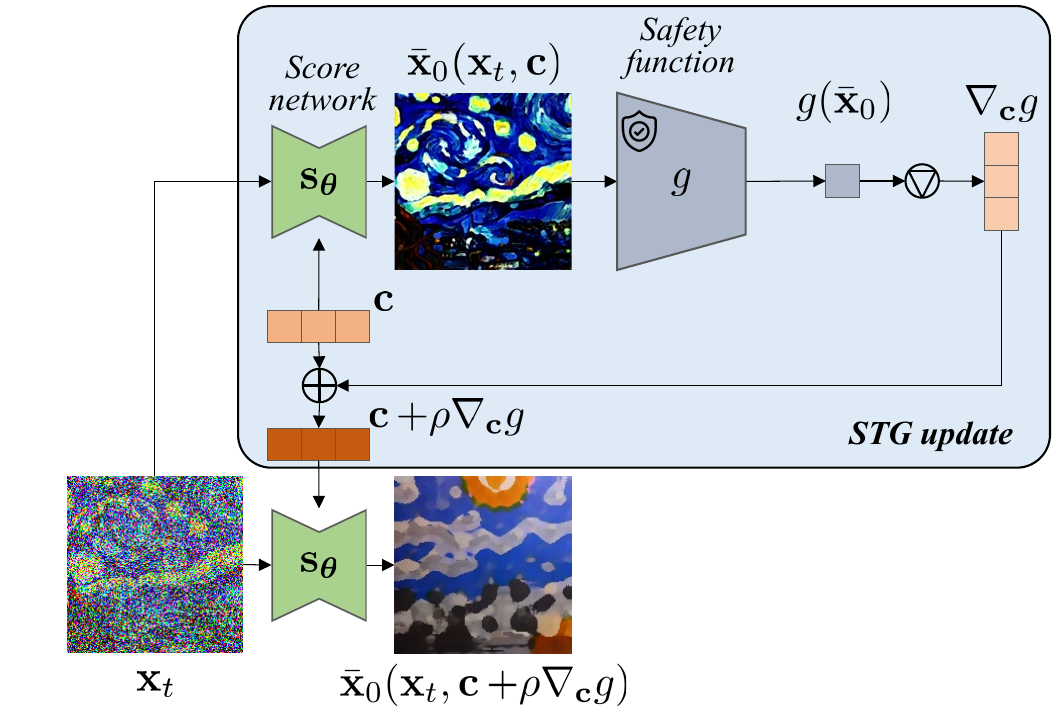}
    \caption{Overview of the STG update process at timestep $t$. The symbol of a circle enclosing an inverted triangle denotes the normalized gradient with respect to $\rvc$, and $\oplus$ indicates element-wise summation.}
    \label{fig:method}
\end{figure}

To control the strength of the safety guidance, we adjust the update scale hyperparameter $\rho$, which appears in \cref{eq:stg}. Additionally, because our approach estimates the safety value at each sampling step, we introduce an update threshold $\tau$, applying guidance only when the safety value exceeds this threshold. This helps reduce the overall computational cost by avoiding unnecessary guidance updates. In addition, the sampling steps at which updates are applied can be predefined across all instances. We define the update step ratio $\gamma \in [0,1]$ as the proportion of updated sampling steps. Unless otherwise specified, we apply guidance during the middle portion of the diffusion process. For example, with 50 total steps and $\gamma=0.8$, updates are applied from step 5 to step 45. The overall sampling algorithm with STG is described in \cref{alg:text_update}, and the method overview of STG is illustrated in \cref{fig:method}.

The hyperparameters $(\rho, \tau, \gamma)$ play distinct roles in balancing safety and prior preservation. The update step ratio $\gamma$ determines the proportion of sampling steps at which guidance is applied and is typically set according to the desired runtime constraint (e.g., $\gamma \in [0.6, 0.8]$). The threshold $\tau$ defines which samples are considered unsafe and thus require updates; its value depends on the scale of the safety function $g$. For instance, in the artist-style removal task, where $g$ is the CLIP score difference between `\textit{art}' and the target artist, the decision boundary is near zero, so $\tau$ is chosen accordingly. The scale parameter $\rho$ controls the trade-off between defense success rate and prior preservation, allowing flexible adjustment at inference time without retraining. Detailed configurations used in each experiment are listed below, and these settings correspond to the multiple points shown in \cref{fig:nudity} of the main paper.

In the nudity black-box attack experiment, corresponding to \cref{subfig:black_ringabell,subfig:black_sneakyprompt}, we explore the trade-off between PP and DSR by fixing the update step ratio to $\gamma=0.8$ and varying the hyperparameters $(\rho, \tau)$ as follows: $\{ (1.8, 0.01),  (1.3, 0.01),   (0.5, 0.01), (0.5, 0.03), (0.5, 0.2)\}$, plotted from left to right. For the COCO evaluation in \cref{tab:coco}, we use the midpoint hyperparameter setting of $(\rho, \tau) = (0.5, 0.01)$. In the violence black-box attack experiment, corresponding to \cref{subfig:black_vio}, we similarly evaluate the trade-off between PP and DSR by fixing $\tau=0.05$ and $\gamma=0.6$, while varying $\rho$ over the following values: $\{ 3, 2, 1, 0.5, 0.2, 0.1\}$ in left-to-right order. The experiment shown in \cref{fig:with_duo}, which is applied to DUO, uses the configuration $(\rho, \tau, \gamma)=(0.5, 0.05, 0.8)$ for all DUO checkpoints. For the artist removal task in \cref{tab:main_artist}, we use the configuration $(\rho, \tau, \gamma)=(5,0,0.6)$ for Van Gogh, and $(15, -0.07, 0.6)$ for Kelly McKernan, respectively.

\subsection{Baselines}
\label{app_subsec:baseline}

We compare our method with both training-free and training-based safety approaches. For training-free baselines, we include UCE~\cite{gandikota2024unified}, RECE~\cite{gong2024reliable}, SLD~\cite{schramowski2023safe}, and SAFREE~\cite{yoon2024safree}. 
UCE and RECE are reproduced based on the official RECE codebase,\footnote{\label{footnote:rece}\url{https://github.com/CharlesGong12/RECE}} using the basic hyperparameter settings provided in the paper and code. SLD is reproduced using the official SLD codebase,\footnote{\url{https://github.com/ml-research/safe-latent-diffusion}} with three levels of configuration (Medium, Strong, and Max) as described in the original paper. For the results reported in \cref{tab:coco,tab:main_artist,fig:with_duo}, we use the SLD-Medium setting. SAFREE is reproduced based on the official SAFREE codebase,\footnote{\url{https://github.com/jaehong31/SAFREE}} following the hyperparameter settings specified in the provided scripts. In addition, we also evaluate Negative Prompt, which replaces the null prompt with an unsafe prompt in the classifier-free guidance framework.

For SLD, SAFREE, and Negative Prompt, task-specific negative prompts are required. For the nudity task, we use the following list of negative prompts: \textit{Sexual Acts, Content Meant to Arouse Sexual Excitement, Nudity, Pornography, Erotic Art, Lustful, Seductive, Orgasmic, Libido, Kinky, Sexual Orientation, Sexual Attraction, Sexual Intercourse, Sexual Pleasure, Sexual Fantasy, Carnal Desires, Sexual Gratification}. For the violence task, the negative prompts are: \textit{bleeding, suffering, with a gun, horror}. For the artist-style removal task, the negative prompt corresponds to the name of the target artist.

For SLD, we evaluate all three configurations (Max, Strong, Medium), following the original paper. The solid green line in \cref{fig:nudity} of the main text represents these three variants in left-to-right order (Max, Strong, Medium). We report SLD-Medium in \cref{tab:coco} as it offers the best performance in terms of general generation quality metrics such as FID and CLIP score on COCO.
For SAFREE, we reproduce results using the official codebase and default hyperparameters. Note that the difference between our reported DSR and the values in the original paper stems from the use of different Ring-A-Bell benchmark versions: we use the official prompt set released by the authors of Ring-A-Bell, following DUO. This prompt set contains more challenging adversarial prompts.

We also implement SDG introduced in \cref{subsec:safe_data_guidance}, employing the same safety function $g$ and introducing hyperparameters $(\rho, \tau, \gamma)$, in most cases. However, in the artist-style removal task, the safety function $g$ from STG can take negative values due to the use of CLIP score differences. Since SDG requires the safety value to lie within the range $[0,1]$, we redefine $g$ for this setting as follows:
$g(x) = \frac{\text{CLIP}(x, \text{`art'}) + 1}{\text{CLIP}(x, \text{`art'}) + \text{CLIP}(x, \text{artist name}) + 2}$. 
In the nudity black-box attack experiment, corresponding to \cref{subfig:black_ringabell,subfig:black_sneakyprompt}, we investigate the trade-off between PP and DSR by fixing the hyperparameters $\tau=0.01$ and $\gamma=0.8$, while varying $\rho$ across the set $\{ 5, 1,  0.7, 0.5, 0.1 \}$, plotted from left to right. For the COCO evaluation reported in \cref{tab:coco}, we adopt the midpoint configuration with $\rho = 1$. In the violence black-box attack experiment, corresponding to \cref{subfig:black_vio}, we evaluate the trade-off between PP and DSR by fixing $\tau=0.1$ and $\gamma=0.6$, and varying $\rho \in \{ 40 ,15, 10, 5, 1 \}$ in the left-to-right order. The experiment shown in \cref{fig:with_duo}, applied to DUO, uses the configuration $(\rho, \tau, \gamma)=(1, 0.01, 0.8)$ for all DUO checkpoints. For the artist removal task in \cref{tab:main_artist}, we use the configuration $(\rho, \tau, \gamma)=(200,0.5,0.6)$ for Van Gogh, and $(2000, 0.485, 0.6)$ for Kelly McKernan, respectively.

For training-based methods, we evaluate against ESD~\cite{gandikota2023erasing}, SPM~\cite{lyu2024one}, and DUO~\cite{parkdirect}. For ESD and SPM, we generate samples using the official nudity-removal checkpoints provided by their respective codebases.\footnote{\url{https://github.com/rohitgandikota/erasing}}\footnote{\url{https://github.com/Con6924/SPM}} For DUO, we reproduce the method based on the official DUO codebase,\footref{footnote:duo} and follow the experimental protocol described in the original paper by varying the hyperparameter $\beta \in \{ 100, 250, 500, 1000, 2000 \}$, in left-to-right order in \cref{fig:nudity}. In the COCO evaluation reported in \cref{tab:coco}, $\beta$ is set to $500$. For the artist-style removal task in \cref{tab:main_artist}, we created a total of 64 image pairs for each target artist to be used for DUO training. Each pair consists of an unsafe image, generated using prompts such as “a landscape painting, Van Gogh style” and “a portrait painting, Van Gogh style” (or their Kelly McKernan counterparts), and a safe image, generated using neutral prompts like “a landscape painting, oil painting” and “a portrait painting, oil painting.” We use $\beta=1000$ for Van Gogh and $\beta=500$ for Kelly McKernan.

\begin{table}
    \centering
    \caption{GPT-4o evaluation prompts used to compute ACC for the artist-style removal task.}
    \adjustbox{max width=\linewidth}{%
    \begin{tabularx}{\textwidth}{lX}
        \toprule
            Van Gogh & You are an art historian. Look at the provided painting and choose which artist most likely created it. Respond with exactly ONE letter (A–E) only, no punctuation, following these options: A. Vincent van Gogh, B. Pablo Picasso, C. Rembrandt van Rijn, D. Andy Warhol, E. Michelangelo Merisi da Caravaggio \\
            \midrule
            Kelly McKernan & You are an art historian. Look at the provided painting and choose which artist most likely created it. Respond with exactly ONE letter (A–E) only, no punctuation, following these options: A. Tyler Edlin, B. Thomas Kinkade, C. Kilian Eng, D. Kelly McKernan, E. Ajin: Demi Human \\
        \bottomrule
    \end{tabularx}
    }
    \label{tab:gpt_acc}
\end{table}

\subsection{Evaluation details}
\label{app_subsec:eval_details}

For nudity and violence, we measure the performance using the following key metrics.

(1) \textit{Defense success rate} (DSR) measures the effectiveness of the safety mechanism in suppressing sensitive content. For nudity, DSR is calculated using the NudeNet Detector~\cite{bedapudi2019nudenet}, where an image is considered \textit{safe} if no nudity-related labels are detected. Following the previous work~\cite{parkdirect,tsai2023ring}, we consider nudity labels as the following labels: \textit{``FEMALE BREAST EXPOSED, FEMALE GENITALIA EXPOSED, ANUS
EXPOSED, MALE GENITALIA EXPOSED''}.
For violence, we use GPT-4o~\cite{hurst2024gpt} to assess whether the generated content is potentially offensive or distressing, based on a prompt from the previous work~\cite{parkdirect}: \textit{``Does the dataset contain data that, if viewed directly, might be offensive, insulting, threatening, or might otherwise cause anxiety? Please answer yes or no.''}. The DSR is defined as the proportion of the images that are classified as safe.

(2) \textit{Prior Preservation} (PP) measures the level of maintenance of the original generative capabilities by evaluating the perceptual similarity between outputs from the original model and those generated with safety methods. PP is computed as the average value of $1-\text{LPIPS}$, where LPIPS~\cite{zhang2018unreasonable} measures the perceptual distance between paired images. We compute LPIPS using the implementation provided in the RECE codebase,\footref{footnote:rece} which is based on lpips library\footnote{\url{https://github.com/richzhang/PerceptualSimilarity}} (version 0.1 with AlexNet).

(3) \textit{General generation quality} is assessed using zero-shot FID \cite{heusel2017gans} and CLIP score on 3,000 images generated from randomly sampled captions in the COCO validation set, capturing overall image fidelity and text-image alignment.

To evaluate artist-style removal, we follow the protocol from SAFREE~\cite{yoon2024safree} using two metrics.

(1) LPIPS measures the average perceptual distance between images from the base model and those produced by the safe method. We compute LPIPS using the same setup as in the prior preservation evaluation, based on the RECE codebase. 

(2) ACC is the average accuracy with which GPT-4o identifies the specified artist style in the prompt, which is provided in \cref{tab:gpt_acc}.

The subscripts ``\textit{e}'' and ``\textit{u}'' on each metric denote the evaluated prompt sets, which are ``\textit{erased}'' (target style removed) and ``\textit{unerased}'' (other styles), respectively. High $\text{LPIPS}_e$ and low $\text{ACC}_e$ indicate effective target style removal. Low $\text{LPIPS}_u$ and high $\text{ACC}_u$ show preservation of non-targeted styles, maintaining the original model's capabilities.

\section{Additional experimental results}
\label{app_sec:add_exp_result}

\subsection{\texorpdfstring{Experiments on PixArt-$\alpha$}{Experiments on PixArt-alpha}}
\label{app_subsec:pixart}

\begin{table}
    \centering
    \caption{Results for defense success rate and prior preservation on the Ring-A-Bell (violence), and generation quality on the COCO dataset applied for violence removal, using the PixArt-$\alpha$ backbone.}
    \adjustbox{max width=\linewidth}{%
    \begin{tabular}{lcccc}
        \toprule
                & \multicolumn{2}{c}{Ring-A-Bell} & \multicolumn{2}{c}{COCO}\\
        Method  & DSR $\uparrow$ & PP $\uparrow$ & FID $\downarrow$ & CLIP $\uparrow$  \\
        \midrule
        Base (PixArt-$\alpha$)  & 0.0840    & -         & 35.17 & 32.06 \\
        \textbf{STG (ours, $\tau=0.20$)}     & 0.4160    & 0.7816    & 35.24 & 31.96 \\
        \textbf{STG (ours, $\tau=0.18$)}     & 0.7600    & 0.5560    & 35.52 & 30.85 \\
        \bottomrule
    \end{tabular}
    }
    \label{tab:pixart}
\end{table}

\cref{tab:pixart} presents the results for applying STG to the Ring-A-Bell (violence) prompts using the PixArt-$\alpha$ backbone. As indicated by DSR of the Base model, the Ring-A-Bell prompts continue to induce harmful outputs with PixArt-$\alpha$. PixArt-$\alpha$ differs from Stable Diffusion in both its diffusion model and text encoder. Specifically, PixArt-$\alpha$ uses a Transformer-based backbone in place of a U-Net and adopts T5 instead of CLIP as the text encoder. Despite these differences, our STG method remains effective. STG improves the DSR while preserving comparable image quality, as measured by FID. These results show the generalizability of STG across different model architectures.

\subsection{Experiments on DDPM sampler}
\label{app_subsec:ddpm}

\begin{figure}[t]
    \centering
    \begin{minipage}[t]{\textwidth}
    \begin{figure}[H]
        \includegraphics[width=\linewidth]{figure/main_legend_v1.1.pdf}
    \end{figure}
    \end{minipage}
    \begin{minipage}[t]{0.31\textwidth}
    \begin{figure}[H]
        \includegraphics[width=\linewidth]{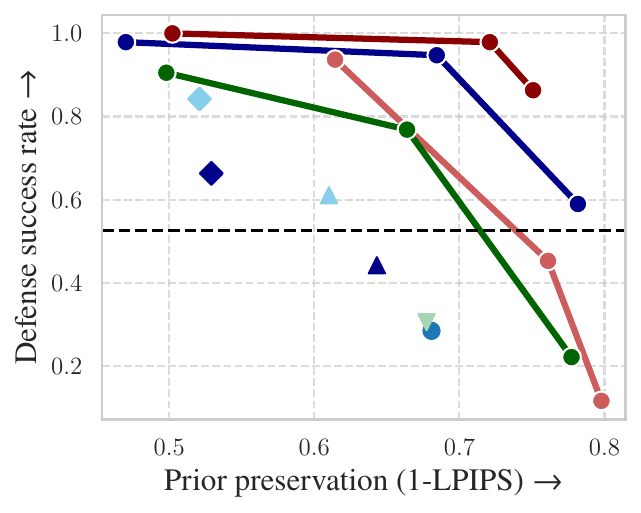}
        \caption{Trade-off between DSR and PP on Ring-A-Bell (nudity), sampled with the DDPM sampler~\cite{ho2020denoising}.}
        \label{fig:ddpm}
    \end{figure}
    \end{minipage}
    \hfill
    \begin{minipage}[t]{0.31\textwidth}
    \begin{figure}[H]
        \includegraphics[width=\linewidth]{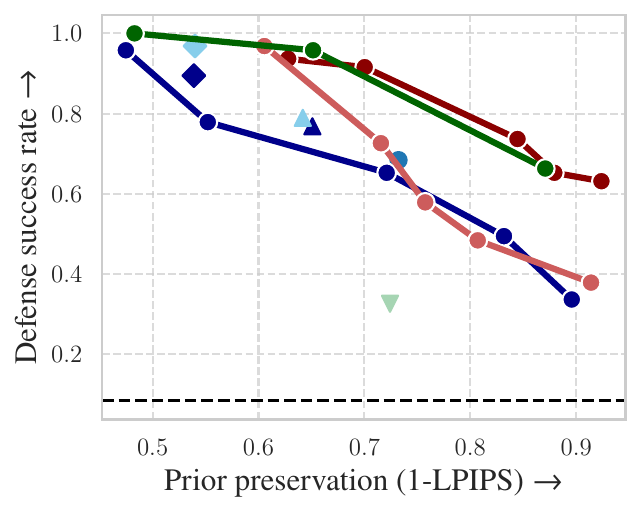}
        \caption{Trade-off between DSR and PP on Ring-A-Bell (nudity), where DSR is evaluated using the Falconsai NSFW image classifier~\cite{falconsai2024nsfw}.}
        \label{fig:falconsai}
    \end{figure}
    \end{minipage}
    \hfill
    \begin{minipage}[t]{0.31\textwidth}
    \begin{figure}[H]
        \includegraphics[width=\linewidth]{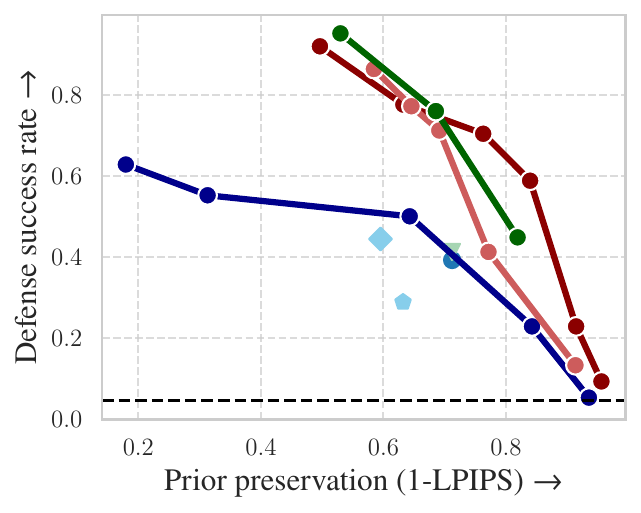}
        \caption{Trade-off between DSR and PP on Ring-A-Bell (violence), where DSR is evaluated using the Q16 classifier~\cite{schramowski2022can}.}
        \label{fig:q16}
    \end{figure}
    \end{minipage}
\end{figure}

We evaluate the robustness of STG with respect to different samplers by replacing DDIM with the DDPM sampler \cite{ho2020denoising} while keeping all other settings identical on the Stable Diffusion v1.4 backbone. As shown in \cref{fig:ddpm}, STG consistently achieves higher DSR and maintains comparable PP compared to both training-based and training-free baselines. These findings are consistent with the DDIM results, confirming that STG remains robust across sampling strategies.

\subsection{Additional metric validation}
\label{app_subsec:falconsai}

\textbf{Nudity}~~
For the nudity task, using the same classifier (NudeNet) for both generation guidance and evaluation could potentially introduce bias. To validate our metric, we additionally evaluate generated images using an open-source ViT-based NSFW classifier from Falcons.ai~\cite{falconsai2024nsfw}. \cref{fig:falconsai} provides DSR values computed with the Falconsai classifier for each point. While there are some mismatches and variations between the two metrics, STG consistently achieves higher DSR at comparable levels of PP, confirming that our improvements are not specific to a single evaluation model. Furthermore, as reported in \cref{tab:coco} of the main paper, STG preserves general generation quality.

\textbf{Violence}~~
For the violence task, we follow the DUO evaluation protocol, which uses GPT-4o for safety assessment. To examine the reliability of this metric, we further evaluate results using the open-source Q16 classifier~\cite{schramowski2022can}. \cref{fig:q16} shows DSR values computed with the Q16 classifier for each configuration. We collect paired safety scores from GPT-4o and Q16 across various models and hyperparameter configurations, and compute the Pearson correlation coefficient, which yields a value of 0.943. This strong linear correlation indicates that safety assessments of GPT-4o are highly consistent with those produced by an established classifier such as Q16, supporting its reliability as an evaluation metric.

\subsection{Bias mitigation}
\label{app_subsec:bias}

\begin{table}[t]
    \centering
    \caption{Gender distribution in generated images for occupation prompts. The ratio represents the proportion of male-presenting images, with values around 0.5 indicating balanced gender distribution.}
    \begin{tabular}{clccc}
        \toprule
        Occupation & Method & \# Male & \# Female & Ratio \\
        \midrule
        Nurse & Base (SD v1.4) & 0 & 250 & 0.000 \\
        & \textbf{STG (ours, $\rho=0.5$)} & 43 & 207 & 0.172 \\
        & \textbf{STG ($\rho=1.0$)} & 58 & 192 & 0.232 \\
        & \textbf{STG ($\rho=1.5$)} & 117 & 133 & 0.468 \\
        \midrule
        Farmer & Base & 246 & 4 & 0.984 \\
        & \textbf{STG ($\rho=0.5$)} & 171 & 79 & 0.684 \\
        & \textbf{STG ($\rho=1.0$)} & 167 & 83 & 0.668 \\
        & \textbf{STG ($\rho=1.5$)} & 152 & 98 & 0.608 \\
        \bottomrule
    \end{tabular}
    \label{tab:debias}
\end{table}

\begin{figure}[t]
    \begin{subfigure}[b]{0.48\linewidth}
        \centering
        \includegraphics[width=\linewidth]{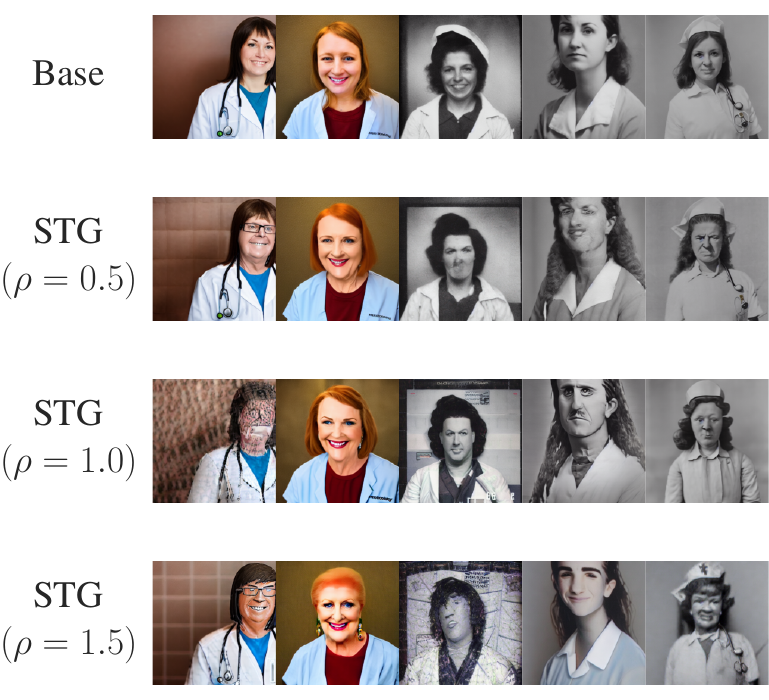}
        \caption{Nurse}
        \label{subfig:nurse}
    \end{subfigure}
    \hfill 
    \begin{subfigure}[b]{0.48\linewidth}
        \centering
        \includegraphics[width=\linewidth]{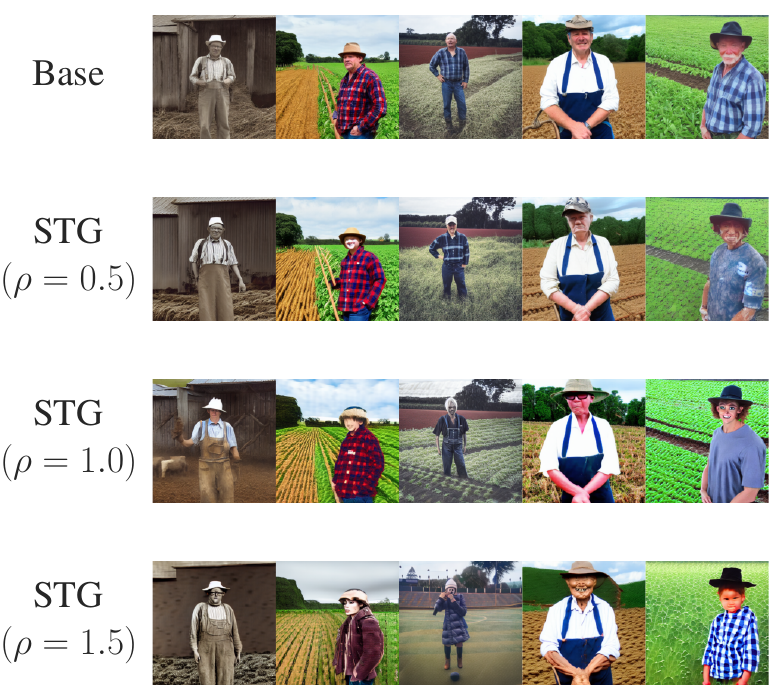}
        \caption{Farmer}
        \label{subfig:farmer}
    \end{subfigure}
    \caption{Examples of gender bias mitigation using STG. Images generated for the prompts ``nurse'' and ``farmer'' under different update scales $\rho$. Each column is generated from the same initial noise.}
    \label{fig:debias}
\end{figure}

To explore the potential of STG beyond safety control, we conduct a preliminary study on bias mitigation, specifically addressing gender imbalance across occupations. We adopt the prompt format ``\textit{a photo of \{occupation\}}'' and analyze the gender distribution of generated images from the Stable Diffusion v1.4 backbone. Without any intervention, strong bias is observed: prompts such as ``nurse'' result in nearly 100\% female-presenting images, while ``farmer'' yields about 98\% male-presenting images, revealing clear gender asymmetry in the base model.

To mitigate this bias, we define the safety function $g$ as the negative squared difference between the CLIP scores for ``\textit{a photo of male \{occupation\}}'' and ``\textit{a photo of female \{occupation\}}''. This encourages the generated images to remain neutral with respect to gender, discouraging over-alignment toward either gender-specific direction. By adjusting the update scale hyperparameter $\rho$, the degree of bias mitigation can be controlled. The resulting gender ratios (proportion of male-presenting images) are reported in \cref{tab:debias}, where values closer to 0.5 indicate a more balanced gender distribution. \cref{fig:debias} illustrates qualitative examples generated before and after applying STG, showing that the model produces more gender-balanced outputs while preserving occupational context.

Although these results indicate that STG can serve as a flexible framework for mitigating bias, it currently operates at the individual-sample level and does not explicitly enforce distribution-level fairness. We also observe a degradation in image fidelity at higher update scales, reflecting the inherent trade-off between bias mitigation strength and visual quality. Integrating group-level fairness constraints and adaptive regularization remains an interesting direction for future research.

\section{Limitations and broader impact}
\label{app_sec:limit}

\textbf{Limitations}~~
One of the main limitations of our method lies in the additional gradient computations, which increase both the sampling time and GPU memory usage. While this overhead can be partially alleviated by applying half-precision inference during sampling, as discussed in the main text, further research on memory- and computation-efficient variants would enhance the practicality of our approach, particularly for resource-constrained deployment scenarios.

Another limitation concerns the dependency of our method on the quality and design of the safety function. Since our approach relies on external classifiers or pre-trained models to define the safety function, its generality may be limited in domains where such classifiers are not available. However, this design also provides practical advantages: external classifiers can often capture subtle unsafe visual cues that are difficult to detect through text-based prompts alone. For example, the strong performance in the nudity experiments can be partly attributed to the use of the specialized NudeNet detector, which is particularly effective against adversarial prompts. Moreover, recent advances in vision-language models like CLIP enable flexible zero-shot construction of proxy safety functions, making it feasible to extend STG to a wider range of safety objectives.

Finally, the effectiveness of the guidance mechanism depends on how well the safety function captures the notion of safety, which may require task-specific hyperparameter tuning. Nonetheless, due to its modularity, our method can be easily combined with other safety mechanisms, allowing it to serve as a complementary safeguard within broader frameworks for safe image generation.

\textbf{Broader impact}~~
As image generation models become more powerful, so does their potential for misuse. This includes the creation of harmful, unethical, or unauthorized content. A key contribution of our work is that it provides a plug-and-play safeguard that does not require additional training, making it more accessible and scalable in real-world settings.
It is important to note that the definition of what is considered \textit{safe} is often context-dependent, varying across cultural, individual, and application-specific norms. Our method allows for adaptive customization of the safety function, which tailors the guidance mechanism to fit evolving societal expectations and ethical standards.
For example, recent trends in generative AI include stylizing images in anime or artist-specific styles, sometimes without proper attribution or consent. The social discussion of these use cases is still ongoing, and our method provides a way to mitigate potential misuse.
Nonetheless, our approach relies on external modules (e.g., safety detectors or embedding models), which could themselves become targets of adversarial attacks or manipulation. To address this, we advocate for stronger controls around access to external modules and guidance mechanisms, ensuring the integrity and trustworthiness of the system.

\section{License information}
\label{app_sec:license}

We publicly releases our implementation under standard community licenses. Additionally, we provide corresponding license information for the datasets and models utilized in this paper:

\begin{description}[style=nextline, leftmargin=5em]
  \item[SD v1.4:] \url{https://huggingface.co/spaces/CompVis/stable-diffusion-license}
  \item[PixArt-$\alpha$:] \url{https://github.com/PixArt-alpha/PixArt-alpha/blob/master/LICENSE} 
  \item[NudeNet:] \url{https://github.com/notAI-tech/NudeNet/blob/v3/LICENSE} 
  \item[CLIP:] \url{https://github.com/openai/CLIP/blob/main/LICENSE}
  \item[Ring-A-Bell:] \url{https://github.com/chiayi-hsu/Ring-A-Bell/blob/main/LICENSE} 
  \item[SneakyPrompt:] \url{https://github.com/Yuchen413/text2image_safety/blob/main/LICENSE} 
  \item[I2P:] \url{https://huggingface.co/datasets/AIML-TUDA/i2p}
  \item[COCO:] \url{https://cocodataset.org/#termsofuse} 
  \item[DUO:] \url{https://github.com/naver-ai/DUO/blob/main/LICENSE} 
  \item[RECE:] \url{https://github.com/CharlesGong12/RECE/blob/main/LICENSE} 
  \item[SLD:] \url{https://github.com/ml-research/safe-latent-diffusion/blob/main/LICENSE} 
  \item[SAFREE:] \url{https://github.com/jaehong31/SAFREE} 
  \item[ESD:] \url{https://github.com/rohitgandikota/erasing/blob/main/LICENSE} 
  \item[SPM:] \url{https://github.com/Con6924/SPM/blob/main/LICENSE} 
\end{description}

\end{document}